\newtheorem{theorem}{Theorem}
\newtheorem{remark}{Remark} 
\newtheorem{lemma}{Lemma} 
\newtheorem{proposition}{Proposition}
\newcommand{\R}{ \mathbb{R}}
\newcommand{\eqdef}{\stackrel{\vartriangle}{=}}
\def\V#1{{\boldsymbol{#1}}}         
\def\M#1{{\bf{#1}}}  
\title{   Deep Neural Networks with Trainable  Activations and Controlled Lipschitz Constant }
\author{ Shayan~Aziznejad, Harshit~Gupta, Joaquim~Campos,  and~Michael~Unser,~\IEEEmembership{Fellow,~IEEE}
\thanks{The authors are with the Biomedical Imaging Group, École polytechnique fédérale de Lausanne, 1015 Lausanne,
Switzerland. e-mails:   \{shayan.aziznejad, harshit.gupta, joaquim.campos, michael.unser\}@epfl.ch}
\thanks{An early version of this work has been presented at the IEEE International Conference on Acoustics, Speech and Signal Processing (ICASSP), Brighton, United Kingdom, May 2019 \cite{aziznejad2019deep}.}
\thanks{This work was supported in part by   the Swiss National Science Foundation, Grant 200020\_184646/1 and in part by the European Research Council (H2020-ERC Project GlobalBioIm) under Grant 692726. }}
\begin{document}

\maketitle
\thispagestyle{plain}
\pagestyle{plain}

\begin{abstract}
  We introduce a variational framework to learn  the activation functions of  deep neural networks. Our aim is to increase the capacity of the network while controlling an upper-bound of the actual Lipschitz constant of  the  input-output relation. To that end,   we first establish a global bound for the Lipschitz constant of neural networks. Based on the obtained bound, we then formulate a variational problem for learning activation functions. Our variational problem is infinite-dimensional and is not computationally  tractable. However, we prove that there always exists a solution that has continuous and piecewise-linear (linear-spline) activations. This reduces the original  problem to a finite-dimensional minimization where an $\ell_1$ penalty on the parameters of the activations favors the learning of sparse nonlinearities. We numerically compare our scheme with standard ReLU network and   its variations, PReLU and  LeakyReLU and we empirically demonstrate the practical aspects of our framework.
\end{abstract}

\begin{IEEEkeywords}
Deep learning, learned activations, deep splines, Lipschitz regularity,  representer theorem.
\end{IEEEkeywords}

\section{Introduction}
\label{Sec:intro}

In supervised learning, the goal is to approximate an unknown mapping from a set of  noisy samples \cite{Bishop2006}.  Specifically, one aims at determining the function $f:\mathbb{R}^d\rightarrow\mathbb{R}^{d'}$, given a dataset of size $M$ that consists  of pairs of the form $(\boldsymbol{x}_m,\boldsymbol{y}_m)$  such that $\boldsymbol{y}_m\approx f(\boldsymbol{x}_m)$ for $m=1,2,\ldots,M$, without over-fitting. 

In the scalar case $d'=1$, a classical formulation of this problem is through the minimization  
\begin{equation}\label{Pb:RKHSLearning}
\min_{f\in \mathcal{H}(\mathbb{R}^d)} \left( \sum_{m=1}^M \mathrm{E}\left( y_m, f(\V x_m)\right) + \lambda\|f\|_{\mathcal{H}}^2\right), 
\end{equation}
where $\mathcal{H}(\mathbb{R}^d)$  is a reproducing-kernel Hilbert space  (RKHS), $\mathrm{E}:\mathbb{R}\times \mathbb{R}\rightarrow\mathbb{R}$ is an arbitrary convex loss function,    and $\lambda$ is a positive constant that controls the regularity of the model \cite{aronszajn1950theory,Wahba1990}. Although this is an infinite-dimensional problem over a Hilbert space, the kernel representer theorem  \cite{kimeldorf1971some,scholkopf2001generalized} states that the solution of \eqref{Pb:RKHSLearning} is unique and admits the parametric form   
\begin{equation}\label{Eq:RKHSSOL}
f(\V x) = \sum_{m=1}^M a_m \mathrm{k}(\V x,\V x_m),
\end{equation}
where $\mathrm{k}(\cdot,\cdot)$ is the unique reproducing kernel of $\mathcal{H}(\mathbb{R}^d)$.  Expansion \eqref{Eq:RKHSSOL} is the key element of kernel-based algorithms in machine learning, including the    framework of support-vector machines  \cite{Vapnik1998}. It also reveals the intimate link between kernel methods,  splines, and  radial basis functions  \cite{ Poggio1990b, Scholkopf1997,girosi1993priors}. 

During the past decade, there has been an increasing interest in deep-learning methods as they   outperform  kernel-based schemes in a variety of tasks such as image classification \cite{Krizhevsky2012}, inverse problems \cite{jin2017deep}, and segmentation \cite{Ronneberger2015}.   The main idea is to replace the kernel expansion \eqref{Eq:RKHSSOL} by a parametric deep neural network that is a repeated composition of affine mappings intertwined with pointwise nonlinearities (a.k.a.\ neuronal activations)  \cite{LeCun2015,Goodfellow2016}.   The challenge then is to optimize the parameters of this model by minimizing  a (typically non-convex) cost function. 

The classical choice for the activation is the  sigmoid function due to its biological interpretation and universal approximation property \cite{cybenko1989approximation}.   However, neural networks with sigmoidal activations suffer from vanishing gradients   which essentially makes training difficult and slow. This stems from the fact that the Sigmoid function is bounded   and     horizontally asymptotic at large positive and negative values. Currently, the preferred activations  are  rectified linear unit $\mathrm{ReLU}(x)=\max(x,0)$ \cite{Glorot2011}  and  its variants such as LeakyReLU, defined as   $\mathrm{LReLU}(x)=\max(x,ax)$ for some $a\in (0,1)$ \cite{maas2013rectifier}.  ReLU-based activations have a wide range, which prevents the network from having vanishing gradients.

Neural networks with ReLU activations have been considered thoroughly in the literature  \cite{LeCun2015}. Their  input-output relation is a continuous piecewise-linear (CPWL) mapping \cite{Montufar2014}. Interestingly, the converse of this result  also holds: Any CPWL function can be represented by a deep ReLU neural network \cite{arora2016understanding}.   One can also interpret a ReLU activation  as a linear spline with one knot. This observation allows one to interpret  deep ReLU networks   as hierarchical splines \cite{Poggio2015}. In addition, Unser showed that linear-spline activations are  optimal in the sense that they have a minimal second-order total variation and, hence, are maximally regularized \cite{Unser2018}; this also provides a   variational justification for ReLU-based activations.  

Although the ReLU networks are favorable, both from  a theoretical and practical point of view, one may   want to go even farther and learn  the activation functions as well. The minimal attempt is to  learn the parameter $a$ in LeakyReLU activations, which is known as the parametric ReLU (PReLU) \cite{he2015delving}.  More generally, one can consider a parametric form for the activations and learn the parameters in the training step. There is a rich literature on the learning of activations represented by splines, a parametric form   characterized by optimality and universality \cite{de1978practical,unser1999splines}. Examples are  perceptive B-splines \cite{lane1991multi},  Catmull-Rom cubic splines \cite{vecci1998learning,guarnieri1999multilayer}, and  adaptive piecewise linear splines  \cite{agostinelli2014learning}, to name a few. 

In theoretical analyses of deep neural networks, the Lipschitz-continuity of the network and the control of its regularity is of great importance and is crucial in  several schemes   of  deep learning, for example in   Wasserstein GANs  \cite{arjovsky2017wasserstein}, in providing compressed sensing type guarantees for generative models \cite{bora2017compressed}, in showing the convergence of CNN-based projection algorithms to solve inverse problems
\cite{gupta2018cnn}, and  in understanding the  generalization property of deep neural networks   \cite{neyshabur2017exploring}. Moreover,  the Lipschitz regularity drives the stability of neural networks, a matter that has been tackled recently \cite{moosavi2016deepfool,fawzi2017robustness,antun2019instabilities}.

In this paper, we propose a variational framework to learn the activation functions with the motivation of increasing  the capacity of the network while controlling its Lipschitz regularity.    To that end, we first provide a global  bound for the Lipschitz constant of the  input-output relation of   neural networks that have second-order bounded-variation activations. Based on the minimization of this bound, we propose an optimization scheme in which we learn the linear weights and the activation functions jointly.   We show that there   always exists a global solution of our proposed minimization made of linear spline activations. We also demonstrate that our proposed regularization has a sparsity-promoting effect on the parameters of the spline activations. Let us remark that our regularization, which is based on an upper-bound, does not ensure that the actual Lipschitz constant of the neural network is minimized---it only prevents it from exceeding a certain range.

Our framework is inspired  from \cite{Unser2018} and  brings in the following new elements:
\begin{itemize}
\item After a slight modification of  the regularization term that was proposed in \cite{Unser2018}, we  identify  a global bound for the Lipschitz constant of the network   (see Theorem \ref{Thm:LipschitzBV}). 
\item We prove the {\it existence} of a linear-spline solution in our framework (see Theorem \ref{Thm:Main} and the discussion after).
\item By providing  numerical examples, we show how to take advantage of  our main results to improve the expressivity of neural networks. This is of practitioner's relevance, as our activation learning module can be used to replace classical activation functions  like ReLU and its variants.  
\end{itemize}

The paper is organized as follows: In Section \ref{Sec:Prelim}, we provide  mathematical preliminaries. In Section \ref{Sec:BV2}, we discuss the properties of neural networks that have second-order bounded-variation activations and  provide a global bound for their Lipschitz constant. We then introduce our variational formulation  and  study its solutions in Section \ref{Sec:Activ}. In Section \ref{Sec:Numerical}, we  illustrate our framework with numerical examples.

\section{Preliminaries}
\label{Sec:Prelim}
 The Schwartz space of smooth and rapidly decaying functions is denoted by $\mathcal{S}(\mathbb{R})$. Its continuous dual $\mathcal{S}'(\mathbb{R})$ is the space of tempered distributions \cite{rudin1991functional}.  The space of continuous functions that vanish at infinity  is denoted by $ \mathcal{C}_0(\mathbb{R})$. It is a Banach space equipped with the supremum norm $\|\cdot\|_{\infty}$ and  is indeed the closure of $\mathcal{S}(\mathbb{R})$ with this norm. Its continuous dual  is the space of Radon measures  $\mathcal{M}(\mathbb{R})$ that is also a Banach space with the total-variation norm defined as \cite{Rudin1987}
\begin{equation}\label{Eq:TV}
\|w\|_{\mathcal{M}} \eqdef \sup_{\substack{\varphi\in\mathcal{S}(\mathbb{R}) \\ \|\varphi\|_{\infty}=1}} \langle w,\varphi \rangle.
\end{equation}
The Banach space  $(\mathcal{M}(\mathbb{R}),\|\cdot\|_{\mathcal{M}})$  is a generalization of $(L_1(\mathbb{R}),\|\cdot\|_{L_1})$, in the sense that $L_1(\mathbb{R})\subseteq \mathcal{M}(\mathbb{R})$ and, for any $f\in L_1(\mathbb{R})$, the relation $\|f\|_{L_1} = \| f\|_{\mathcal{M}}$ holds. However, it is larger than $L_1(\mathbb{R})$. For instance, it  contains the shifted Dirac impulses $\delta(\cdot-\boldsymbol{x}_0)$ with $\|\delta(\cdot-\boldsymbol{x}_0)\|_{\mathcal{M}}=1$, for all $\boldsymbol{x}_0\in\mathbb{R}$ that are not included in $L_1(\mathbb{R})$. 

The space of functions with second-order bounded variations is denoted by $\mathrm{BV}^{(2)}(\mathbb{R})$ and is defined as
\begin{equation}\label{Eq:BV2}
\mathrm{BV}^{(2)}(\mathbb{R})= \{ f \in \mathcal{S}'(\mathbb{R}): \quad  \|\mathrm{D}^2 f \|_{\mathcal{M}} < +\infty\},
\end{equation}
where $\mathrm{D}:\mathcal{S}'(\mathbb{R})\rightarrow\mathcal{S}'(\mathbb{R})$ is the generalized  derivative operator \cite{unser2014introduction}.  Let us mention that the second-order total variation $\mathrm{TV}^{(2)}(f)\eqdef  \|\mathrm{D}^2 f \|_{\mathcal{M}}  $ is only a semi-norm in this space, since the null space of the linear operator  $\mathrm{D}^2$ is nontrivial and    consists  of degree-one polynomials (affine mappings in $\mathbb{R}$). However, it can become a {\it bona fide} Banach space with the $\mathrm{BV}^{(2)} $ norm
\begin{equation}\label{Eq:BV2norm}
\|f\|_{\mathrm{BV}^{(2)} } \eqdef {\rm TV}^{(2)}(f) + |f(0)| +   |f(1)|. 
\end{equation}
 This space has been extensively studied in \cite{Unser2018} and in a more general setting in \cite{unser2017splines}.  We summarize some of its important properties in  Appendix \ref{App:BV2Space}. 

Given generic  Banach spaces $(\mathcal{X},\|\cdot\|_{\mathcal{X}})$ and  $(\mathcal{Y},\|\cdot\|_{\mathcal{Y}})$, a  function $f:\mathcal{X}\rightarrow\mathcal{Y}$ is said to be Lipschitz-continuous if  there exists a finite constant $C>0$ such that  
\begin{equation}\label{Eq:Lipschitz}
\|f(x_1)-f(x_2)\|_{\mathcal{Y}} \leq C \|x_1-x_2\|_{\mathcal{X}}, \quad \forall x_1,x_2 \in \mathcal{X}.
\end{equation}  
The minimal value of $C$ is  called the Lipschitz constant of $f$. 

In this paper, we   consider fully connected feed forward neural networks. An $L$-layer neural network $\mathbf{f}_{\mathrm{deep}}:\mathbb{R}^{N_0}\rightarrow\mathbb{R}^{N_L}$ with the layer descriptor $(N_0,N_1,\ldots,N_L)$ is the  composition of   the vector-valued functions $\mathbf{f}_{l}:\mathbb{R}^{N_{l-1}} \rightarrow\mathbb{R}^{N_{l}}$ for $l=1,\ldots,L$ as   
\begin{equation}\label{Eq:DNN}
\M f_{\mathrm{deep}} :\R^{N_0} \to \R^{N_{L}}: \V x \mapsto \M f_L \circ \dots \circ \M f_1(\V x).
\end{equation}
Each vector-valued function $\mathbf{f}_{l}$ is a layer of the neural network $\mathbf{f}_{\mathrm{deep}}$ and consists of two elementary operations: linear transformations and point-wise nonlinearities.  In other words, for the  $l$th layer, there exists weight vectors $\mathbf{w}_{n,l}\in\mathbb{R}^{N_{l-1}}$ and nonlinear functions (activations) $\sigma_{n,l}:\mathbb{R}\rightarrow\mathbb{R}$ for $n=1,2,\ldots,N_l$ such that 
\begin{equation}
\label{Eq:DNNLayer}
\M f_l(\V x) = \left( \sigma_{1,l} (\M w_{1,l}^T \V x) ,\sigma_{2,l} (\M w_{2,l}^T \V x) ,\ldots,\sigma_{N_l,l} (\M w_{N_l,l}^T \V x) \right).
\end{equation}
  
\begin{figure}[t]
\begin{minipage}{1.0\linewidth}
  \centering
  \centerline{\includegraphics[width=\linewidth]{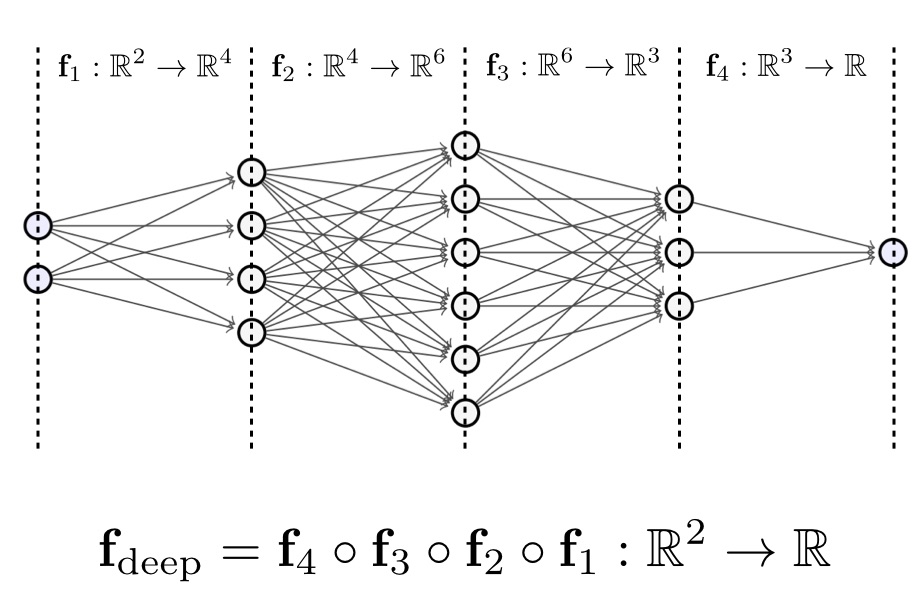}}
  \caption{Schematic view of a neural network with the layer descriptor $(2,4,6,3,1)$. Each layer   consists of linear weights (arrows) and point-wise nonlinearities (circles).}
  \label{BPFig} \medskip
\end{minipage}
\end{figure}

One can also consider  an alternative representation of the $l$th layer by defining the matrix $\mathbf{W}_{l} = \begin{bmatrix} \mathbf{w}_{1,l} &\mathbf{w}_{2,l} &\cdots &\mathbf{w}_{N_l,l} 
\end{bmatrix}^T$ and the vector-valued nonlinear function $\boldsymbol{\sigma}_l:\mathbb{R}^{N_{l}}\rightarrow\mathbb{R}^{N_l}$ as the mapping
\begin{equation}
(x_1,\ldots,x_{N_{l}})\mapsto (\sigma_{1,l}(x_1),\sigma_{2,l}(x_2),\ldots,\sigma_{N_l,l}(x_{N_l})). 
\end{equation} 
With this notation,  the $l$th layer has simply the form $\mathbf{f}_{l} = \boldsymbol{\sigma}_{l} \circ \mathbf{W}_{l}$. 

Lastly, for any $p\in [1,+\infty)$, we define the $(\mathrm{BV}^{(2)},p)$-norm of the nonlinear layer $\boldsymbol{\sigma}_{l}$ as 
\begin{equation}
\|\boldsymbol{\sigma}_{l}\|_{{\rm BV}^{(2)},p} = \left( \sum_{n=1}^{N_l} \|\sigma_{n,l}\|_{{\rm BV}^{(2)}}^p \right)^{\frac{1}{p}}.
\end{equation}

\section{Second-Order Bounded-Variation Activations}
\label{Sec:BV2}
We   consider activations from the space of second-order bounded-variation functions $\mathrm{BV}^{(2)}(\mathbb{R})$. This ensures that the corresponding neural network satisfies several desirable properties  which we discuss in this section. The key feature of these activations is their Lipschitz continuity, as stated in Proposition \ref{BVLipschitz}. The proof is provided in Appendix \ref{App:BVLip}. 
\begin{proposition}\label{BVLipschitz}
Any function with second-order bounded variation is Lipschitz-continuous. Specifically, for any function $\sigma \in {\rm BV}^{(2)}(\R)$ and any $x_1,x_2\in \mathbb{R}$, we have that 
\begin{align}
\label{Eq:Lipchitz}
|\sigma(x_1)-\sigma(x_2)| \le \|\sigma \|_{{\rm BV}^{(2)}} |x_1-x_2|.
\end{align} 
\end{proposition}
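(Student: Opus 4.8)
The plan is to reduce \eqref{Eq:Lipchitz} to a pointwise bound on the derivative of $\sigma$ and then integrate. Concretely, I would show that $\sigma$ coincides almost everywhere with a locally absolutely continuous function whose classical derivative $g \eqdef \mathrm{D}\sigma$ satisfies $\|g\|_{L_\infty(\mathbb{R})} \le \|\sigma\|_{\mathrm{BV}^{(2)}}$. Once this is in hand, the fundamental theorem of calculus gives $\sigma(x_1)-\sigma(x_2)=\int_{x_2}^{x_1} g(t)\,\mathrm{d}t$ for all $x_1,x_2\in\mathbb{R}$, whence $|\sigma(x_1)-\sigma(x_2)|\le \|g\|_{L_\infty(\mathbb{R})}\,|x_1-x_2|$, which is the claimed estimate. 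The fact that $\mathrm{D}^2\sigma\in\mathcal{M}(\mathbb{R})$ forces $g=\mathrm{D}\sigma$ to be a genuine function of (first-order) bounded variation with $\|\mathrm{D}g\|_{\mathcal{M}}=\mathrm{TV}^{(2)}(\sigma)$, and $\sigma$ to be its primitive; these structural facts about $\mathrm{BV}^{(2)}(\mathbb{R})$ are exactly what is recalled in Appendix~\ref{App:BV2Space}, so I would invoke them here.

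It then remains to bound $\|g\|_{L_\infty(\mathbb{R})}$, and the key point is that a BV function on the whole line has essential oscillation controlled by the total variation of its derivative, i.e.
\[
\esssup_{t\in\mathbb{R}} g(t) - \essinf_{t\in\mathbb{R}} g(t) \ \le\ \|\mathrm{D}g\|_{\mathcal{M}} \ =\ \mathrm{TV}^{(2)}(\sigma).
\]
To locate this narrow range, I would feed the two boundary functionals into the FTC identity on $[0,1]$: since $\int_0^1 g(t)\,\mathrm{d}t = \sigma(1)-\sigma(0)$, averaging gives $\sigma(1)-\sigma(0)\in[\essinf_{[0,1]}g,\ \esssup_{[0,1]}g]\subseteq[\essinf_{\mathbb{R}}g,\ \esssup_{\mathbb{R}}g]$. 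Hence the interval $[\essinf_{\mathbb{R}}g,\ \esssup_{\mathbb{R}}g]$ has length at most $\mathrm{TV}^{(2)}(\sigma)$ and contains $\sigma(1)-\sigma(0)$, so $|g(t)-(\sigma(1)-\sigma(0))|\le \mathrm{TV}^{(2)}(\sigma)$ for almost every $t$, and therefore
\[
\|g\|_{L_\infty(\mathbb{R})}\ \le\ |\sigma(1)-\sigma(0)| + \mathrm{TV}^{(2)}(\sigma)\ \le\ |\sigma(0)|+|\sigma(1)|+\mathrm{TV}^{(2)}(\sigma)\ =\ \|\sigma\|_{\mathrm{BV}^{(2)}}.
\]
Combining with the integral representation of $\sigma(x_1)-\sigma(x_2)$ closes the argument.

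I expect the only real obstacle to be the measure-theoretic bookkeeping in the transition from the distributional to the pointwise picture: namely, that the tempered distribution $\sigma$ is represented by a locally absolutely continuous function whose a.e.\ derivative is the BV function $\mathrm{D}\sigma$, and that the essential oscillation of a BV function is bounded by the total-variation norm of its derivative. Both are classical, and rather than reprove them I would route the argument through the representation of $\mathrm{BV}^{(2)}(\mathbb{R})$ collected in Appendix~\ref{App:BV2Space}. One should also note that the constant $\|\sigma\|_{\mathrm{BV}^{(2)}}$ is sharp, as the linear map $\sigma(x)=x$ has $\mathrm{TV}^{(2)}(\sigma)=|\sigma(0)|=0$, $|\sigma(1)|=1$, and Lipschitz constant $1$; consequently the chain of inequalities above must be carried out with no slack, which is precisely why both boundary evaluations $|\sigma(0)|$ and $|\sigma(1)|$ — rather than a single anchoring value — are built into the norm \eqref{Eq:BV2norm}.
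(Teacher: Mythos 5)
Your argument is correct, but it takes a different route from the paper. The paper's proof works directly from the integral representation \eqref{Eq:RepBV}: it observes that the kernel satisfies $|h(x_1,y)-h(x_2,y)|\le|x_1-x_2|$ for every $y$, so the triangle inequality applied to \eqref{Eq:RepBV} immediately gives $|\sigma(x_1)-\sigma(x_2)|\le\bigl(\|\mathrm{D}^2\sigma\|_{\mathcal{M}}+|b_2|\bigr)|x_1-x_2|$ with $b_2=\sigma(1)-\sigma(0)$, and then $|b_2|\le|\sigma(0)|+|\sigma(1)|$ yields \eqref{Eq:Lipchitz}. You instead pass to the first derivative: you bound $\|\mathrm{D}\sigma\|_{L_\infty}$ by the essential oscillation of a BV function (at most $\mathrm{TV}^{(2)}(\sigma)$) anchored by the mean value $\int_0^1 \mathrm{D}\sigma=\sigma(1)-\sigma(0)$, and then integrate. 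Both arguments in fact establish the same slightly sharper constant $\mathrm{TV}^{(2)}(\sigma)+|\sigma(1)-\sigma(0)|$. The paper's version is shorter and avoids all almost-everywhere bookkeeping once \eqref{Eq:RepBV} is accepted; yours isolates a reusable intermediate statement (a sup-norm bound on $\mathrm{D}\sigma$, which is essentially the exact Lipschitz constant) and makes transparent how the two point evaluations in \eqref{Eq:BV2norm} control the affine null-space component. One small caveat: Appendix \ref{App:BV2Space} records the representation \eqref{Eq:RepBV}, not verbatim the structural facts you invoke (that $\sigma$ is locally absolutely continuous with derivative a BV function of variation $\mathrm{TV}^{(2)}(\sigma)$); these do follow in one line by differentiating \eqref{Eq:RepBV}, so this is a matter of citation/derivation rather than a gap, but you should state that step explicitly rather than attribute it to the appendix as given.
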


Lipschitz functions are known to be continuous and  differentiable almost everywhere \cite{heinonen2005lectures}. Moreover,  in Proposition \ref{Prop:OneSidedDerivatives}, we show that any element of $\mathrm{BV}^{(2)}(\mathbb{R})$ has well-defined right and left derivatives at any point.  This is an important property for activation functions, since it is the minimum requirement for performing gradient-based algorithms that take advantage of the celebrated back-propagation scheme  in the training step  \cite{rumelhart1988learning}.

\begin{proposition}\label{Prop:OneSidedDerivatives}
For any function $\sigma \in \mathrm{BV}^{(2)}(\mathbb{R})$ and any $x_0 \in \mathbb{R}$, the left and right derivatives of $\sigma$ at the point $x=x_0$ exist and are finite.   
\end{proposition}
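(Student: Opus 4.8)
The plan is to exploit the structure of $\mathrm{BV}^{(2)}(\mathbb{R})$ functions, namely that $w \eqdef \mathrm{D}^2 \sigma$ is a finite Radon measure, and that $\sigma'$ (in the distributional sense) is therefore a function of bounded variation in the classical sense. First I would write $\sigma = \phi_w + (a x + b)$, where $a,b\in\mathbb{R}$ and $\phi_w$ is the double antiderivative of $w$; since affine terms obviously have one-sided derivatives everywhere, it suffices to treat $\phi_w$. The key object is $g \eqdef \mathrm{D}\phi_w$, which satisfies $\mathrm{D}g = w \in \mathcal{M}(\mathbb{R})$, so $g$ is (a representative that is) a classical BV function; such functions admit left and right limits at every point, i.e.\ $g(x_0^+) = \lim_{x\downarrow x_0} g(x)$ and $g(x_0^-) = \lim_{x\uparrow x_0} g(x)$ exist and are finite, because the total variation $\|w\|_{\mathcal{M}}$ bounds the oscillation of $g$ on any interval.

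Next I would show that these one-sided limits of $g$ are exactly the one-sided derivatives of $\sigma$ at $x_0$. Using the fundamental theorem of calculus for BV functions, for $x > x_0$ one has $\sigma(x) - \sigma(x_0) = \int_{x_0}^{x} g(t)\,\dint t$ (plus the contribution of the affine part, which is linear and harmless), hence
\begin{equation*}
\frac{\sigma(x)-\sigma(x_0)}{x-x_0} = \frac{1}{x-x_0}\int_{x_0}^{x} g(t)\,\dint t.
\end{equation*}
Since $g$ is right-continuous-up-to-a-limit at $x_0$, for any $\varepsilon>0$ there is $\eta>0$ with $|g(t) - g(x_0^+)| \le \varepsilon$ for all $t\in(x_0,x_0+\eta)$, and averaging gives $\bigl|\tfrac{\sigma(x)-\sigma(x_0)}{x-x_0} - g(x_0^+)\bigr| \le \varepsilon$ for $x\in(x_0,x_0+\eta)$. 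This proves the right derivative of $\sigma$ at $x_0$ exists and equals $g(x_0^+)$; the left derivative is handled symmetrically, equalling $g(x_0^-) + $ (the slope of the affine part absorbed appropriately). Finiteness follows because Proposition \ref{BVLipschitz} already gives $|\sigma(x)-\sigma(x_0)| \le \|\sigma\|_{\mathrm{BV}^{(2)}}|x-x_0|$, so every difference quotient is bounded by $\|\sigma\|_{\mathrm{BV}^{(2)}}$, and the one-sided limits inherit this bound.

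The main obstacle I anticipate is the justification of the structural decomposition $\sigma = \phi_w + \text{affine}$ and the claim that $\mathrm{D}\phi_w$ has a genuine BV representative with everywhere-defined one-sided limits; this is a statement about lifting a distributional identity $\mathrm{D}^2\sigma = w$ to a pointwise statement. I would lean on the characterization of $\mathrm{BV}^{(2)}(\mathbb{R})$ summarized in the paper's Appendix \ref{App:BV2Space} (and the references \cite{Unser2018,unser2017splines}), which should supply precisely that any $\sigma\in\mathrm{BV}^{(2)}(\mathbb{R})$ has a representative of the form $\sigma(x) = \int_{\mathbb{R}} \bigl((x-t)_+ - \text{correction}\bigr)\,\dint w(t) + a x + b$, a "second-order primitive" of the measure $w$; once that representation is in hand, $g(x) = \int_{\mathbb{R}} \Indic_{\{t \le x\}}\,\dint w(t) + a$ (a cumulative-distribution-type function) manifestly has left and right limits everywhere by dominated convergence, since $\|w\|_{\mathcal{M}}<+\infty$. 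The remaining steps are then routine estimates on averages of $g$.
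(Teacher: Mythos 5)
Your proof is correct, but it follows a genuinely different route from the paper. The paper's argument is elementary and self-contained: it forms the difference quotients $\Delta\sigma(a,b)$, bounds them via Proposition \ref{BVLipschitz}, and shows by contradiction that $\limsup_{h\to 0^+}\Delta\sigma(h,0)=\liminf_{h\to 0^+}\Delta\sigma(h,0)$, since otherwise one can build interleaved sequences $a_n>b_n>a_{n+1}\downarrow 0$ whose alternating secant slopes force $\|\mathrm{D}^2\sigma\|_{\mathcal{M}}=+\infty$; it never invokes the integral representation. You instead use the structure of $\mathrm{BV}^{(2)}(\mathbb{R})$: the distributional derivative $g=\mathrm{D}\sigma$ has $\mathrm{D}g=w\in\mathcal{M}(\mathbb{R})$, hence agrees a.e.\ with the c\`adl\`ag primitive $x\mapsto w((-\infty,x])+c$, which has finite one-sided limits everywhere, and the fundamental theorem of calculus for the (Lipschitz, hence absolutely continuous) $\sigma$ plus an averaging estimate identifies $\sigma'(x_0^{\pm})$ with $g(x_0^{\pm})$. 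What each buys: the paper's proof needs nothing beyond Proposition \ref{BVLipschitz} and the definition of $\mathrm{TV}^{(2)}$, while yours leans on the representation \eqref{Eq:RepBV} (already used in the paper's proof of Proposition \ref{BVLipschitz}, so this is consistent) and in exchange gives more information, namely $\sigma'(x_0^{+})-\sigma'(x_0^{-})=\mathrm{D}^2\sigma(\{x_0\})$, i.e.\ kinks sit exactly at the atoms of $\mathrm{D}^2\sigma$. One point to tighten: obtain $g(x)=w((-\infty,x])+c$ a.e.\ directly from the fact that a locally integrable function whose distributional derivative is a finite measure differs from the cumulative function of that measure by an a.e.\ constant, rather than by differentiating the kernel $h(x,y)$ of \eqref{Eq:RepBV} term by term, since the correction terms $(-y)_+$ and $(1-y)_+$ need not be individually $|w|$-integrable (only the combined, compactly supported difference quotient of $h$ is); with that routing, the remaining averaging steps are exactly as you describe, and finiteness follows from the Lipschitz bound as you note.
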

The proof of Proposition \ref{Prop:OneSidedDerivatives} is available in   Appendix \ref{App:OneSideDer}. Let us mention that Lipschitz functions in general do  not have one-sided derivatives at all points; it is a property that is specific of ${\rm BV}^{(2)}$ functions. As an example, consider the  function $f:\mathbb{R}\rightarrow\mathbb{R}$ with 
\begin{equation}
f(x)=\begin{cases}    x \sin(\log(x)), & x>0  \\
0, & x\leq 0. 
\end{cases}
\end{equation} 
One readily   verifies that $f$ is Lipschitz-continuous with the constant   $C=\sqrt{2}$. However, for  positive values of $h$, the function $\frac{f(h)}{h}= \sin(-\log(h))$ oscillates between $(-1)$ and $1$ as  $h$ goes to zero. Hence, $f$ does  not  have a right derivative at the point $x_0=0$.

In Theorem \ref{Thm:LipschitzBV}, we prove that any neural network with activations from $\mathrm{BV}^{(2)}(\mathbb{R})$ specifies a Lipschitz-continuous input-output relation. Moreover, we provide an upper-bound for its Lipschitz constant.  The proof can be found in  Appendix \ref{App:NNLip}. 
\begin{theorem}\label{Thm:LipschitzBV}
Any feed forward fully connected deep neural network $\mathbf{f}_{\mathrm{deep}}:\mathbb{R}^{N_0}\rightarrow\mathbb{R}^{N_L}$ with  second-order bounded-variation activations $\sigma_{n,l}\in \mathrm{BV}^{(2)}(\mathbb{R})$ is Lipschitz-continuous. Moreover, if we consider the $\ell_p$  for $p\in [1,\infty]$  topology in the input and output spaces,  the neural network satisfies the global  Lipschitz bound 
\begin{align}\label{LipschitzFinalBound}
\left\| \M f_{\rm deep}(\V x_1) - \M f_{\rm deep}(\V x_2)\right\|_p & \le  C \|\V x_1-\V x_2\|_p 
\end{align}
 for all   $\V x_1 , \V x_2 \in \mathbb{R}^{N_0}$, where 
\begin{equation}\label{LipschitzConstant}
C = \left( \prod_{l=1}^L  \|\mathbf{W}_{l}\|_{q,\infty} \right). \left( \prod_{l=1}^L \|\boldsymbol{\sigma}_l\|_{\mathrm{BV}^{(2)},p}\right),
\end{equation}
$q \in [1,\infty]$ is such that $\frac{1}{p}+\frac{1}{q}=1$ and   $ \|\mathbf{W}_{l}\|_{q,\infty} = \max_{n} \|\mathbf{w}_{n,l}\|_{q}$ is  the  mixed norm  ($\ell_q-\ell_{\infty}$)
of the $l$th linear layer. 
\end{theorem}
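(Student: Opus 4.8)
The plan is to prove the Lipschitz bound \eqref{LipschitzFinalBound}--\eqref{LipschitzConstant} by a layer-by-layer argument, exploiting the fact that a composition of Lipschitz maps is Lipschitz with constant equal to the product of the individual Lipschitz constants. Since $\M f_{\mathrm{deep}} = \M f_L \circ \dots \circ \M f_1$ and each layer factors as $\M f_l = \boldsymbol{\sigma}_l \circ \M W_l$, it suffices to bound the Lipschitz constant of (i) the linear map $\V x \mapsto \M W_l \V x$ in the $\ell_p \to \ell_p$ sense, and (ii) the nonlinear map $\boldsymbol{\sigma}_l$ in the $\ell_p \to \ell_p$ sense, and then multiply all $2L$ constants together.

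For step (i), I would estimate $\|\M W_l \V z\|_p$ for $\V z = \V x_1 - \V x_2$. Each component of $\M W_l \V z$ is $\M w_{n,l}^T \V z$, and by Hölder's inequality $|\M w_{n,l}^T \V z| \le \|\M w_{n,l}\|_q \|\V z\|_p$ with $\frac1p + \frac1q = 1$. Hence
\begin{equation*}
\|\M W_l \V z\|_p^p = \sum_{n=1}^{N_l} |\M w_{n,l}^T \V z|^p \le \left( \max_n \|\M w_{n,l}\|_q \right)^p \|\V z\|_p^p \cdot N_l \, ?
\end{equation*}
Wait --- this naive bound would pick up an undesired $N_l^{1/p}$ factor, so the right move is subtler: actually $\|\M W_l \V z\|_p \le \|\M W_l\|_{q,\infty}\|\V z\|_p$ should be read componentwise only when $p=\infty$, and for finite $p$ one must argue more carefully. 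The cleaner route, and the one I would actually take, is to handle the two extreme cases $p=\infty$ (so $q=1$) and $p=1$ (so $q=\infty$) directly, and note the general case reduces to these via the structure of the proof; more precisely, for $p = \infty$ one has $\|\M W_l \V z\|_\infty = \max_n |\M w_{n,l}^T \V z| \le \max_n \|\M w_{n,l}\|_1 \|\V z\|_\infty = \|\M W_l\|_{1,\infty}\|\V z\|_\infty$, which is exactly the claimed form. The generic $p$ case would instead use the appropriate operator-norm identity; I expect this to be the main technical obstacle, since the mixed norm $\|\M W_l\|_{q,\infty} = \max_n \|\M w_{n,l}\|_q$ is genuinely the $\ell_p \to \ell_\infty$ operator norm of $\M W_l$, not the $\ell_p \to \ell_p$ one, so one needs to verify that composing with the subsequent nonlinear layer (which is coordinatewise) bridges the gap correctly, or alternatively accept the bound in a weaker mixed-norm sense.

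For step (ii), I invoke Proposition \ref{BVLipschitz}: each scalar activation $\sigma_{n,l}$ satisfies $|\sigma_{n,l}(t_1) - \sigma_{n,l}(t_2)| \le \|\sigma_{n,l}\|_{\mathrm{BV}^{(2)}} |t_1 - t_2|$. Then for the vector map,
\begin{equation*}
\|\boldsymbol{\sigma}_l(\V u) - \boldsymbol{\sigma}_l(\V v)\|_p^p = \sum_{n=1}^{N_l} |\sigma_{n,l}(u_n) - \sigma_{n,l}(v_n)|^p \le \sum_{n=1}^{N_l} \|\sigma_{n,l}\|_{\mathrm{BV}^{(2)}}^p |u_n - v_n|^p \le \left( \max_n \|\sigma_{n,l}\|_{\mathrm{BV}^{(2)}}^p \right) \|\V u - \V v\|_p^p,
\end{equation*}
giving Lipschitz constant $\max_n \|\sigma_{n,l}\|_{\mathrm{BV}^{(2)}} \le \|\boldsymbol{\sigma}_l\|_{\mathrm{BV}^{(2)},p}$ since the $\ell_p$ norm dominates the $\ell_\infty$ norm of the vector of seminorms. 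The case $p = \infty$ is the same with $\max$ in place of the sum.

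Finally, I would assemble the pieces: chaining the $2L$ Lipschitz estimates along the composition $\boldsymbol{\sigma}_L \circ \M W_L \circ \dots \circ \boldsymbol{\sigma}_1 \circ \M W_1$ yields
\begin{equation*}
\|\M f_{\mathrm{deep}}(\V x_1) - \M f_{\mathrm{deep}}(\V x_2)\|_p \le \left( \prod_{l=1}^L \|\M W_l\|_{q,\infty} \right) \left( \prod_{l=1}^L \|\boldsymbol{\sigma}_l\|_{\mathrm{BV}^{(2)},p} \right) \|\V x_1 - \V x_2\|_p,
\end{equation*}
which is precisely \eqref{LipschitzFinalBound} with $C$ as in \eqref{LipschitzConstant}; finiteness of $C$ (hence Lipschitz-continuity of $\M f_{\mathrm{deep}}$) follows since each $\|\sigma_{n,l}\|_{\mathrm{BV}^{(2)}}$ is finite by assumption. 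The one subtlety to flag in the write-up is the alternation order of linear and nonlinear blocks and making sure the mixed-norm bound on $\M W_l$ is applied in a topology consistent with the bound on the adjacent $\boldsymbol{\sigma}_l$; I would address this by ordering the composition so that each $\M W_l$ is immediately followed by $\boldsymbol{\sigma}_l$ and treating the pair's Lipschitz constant as the product $\|\M W_l\|_{q,\infty}\,\|\boldsymbol{\sigma}_l\|_{\mathrm{BV}^{(2)},p}$ directly, which sidesteps any mismatch.
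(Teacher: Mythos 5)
Your overall strategy---compose per-layer Lipschitz estimates, with Proposition \ref{BVLipschitz} supplying the scalar bound for each activation and H\"older's inequality handling the rows of $\mathbf{W}_l$---is the same as the paper's, and you correctly diagnose the pitfall: $\|\mathbf{W}_l\|_{q,\infty}=\max_n\|\mathbf{w}_{n,l}\|_q$ is an $\ell_p\to\ell_\infty$ operator norm, so splitting a layer into a linear $\ell_p\to\ell_p$ factor and a nonlinear $\ell_p\to\ell_p$ factor either costs a spurious $N_l^{1/p}$ (your step (i)) or replaces $\|\boldsymbol{\sigma}_l\|_{\mathrm{BV}^{(2)},p}$ by $\max_n\|\sigma_{n,l}\|_{\mathrm{BV}^{(2)}}$ (your step (ii)), and these two pieces do not chain to the constant in \eqref{LipschitzConstant} for finite $p$. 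The gap is that your proposed resolution---``treat the pair's Lipschitz constant as the product $\|\mathbf{W}_l\|_{q,\infty}\,\|\boldsymbol{\sigma}_l\|_{\mathrm{BV}^{(2)},p}$ directly''---is precisely the inequality that has to be proved, and nowhere in the proposal is it derived; as written, only the $p=\infty$ case is actually established.

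Closing the gap is short, and it is what the paper does: work neuron by neuron on the composite map $\boldsymbol{\sigma}_l\circ\mathbf{W}_l$ and aggregate in $\ell_p$ only at the end. For each $n$, Proposition \ref{BVLipschitz} followed by H\"older and $\|\mathbf{w}_{n,l}\|_q\le\|\mathbf{W}_l\|_{q,\infty}$ gives
\begin{equation*}
\bigl|\sigma_{n,l}(\mathbf{w}_{n,l}^T\boldsymbol{x}_1)-\sigma_{n,l}(\mathbf{w}_{n,l}^T\boldsymbol{x}_2)\bigr|
\le \|\sigma_{n,l}\|_{\mathrm{BV}^{(2)}}\,\|\mathbf{W}_l\|_{q,\infty}\,\|\boldsymbol{x}_1-\boldsymbol{x}_2\|_p .
\end{equation*}
Raising to the $p$-th power and summing over $n$ keeps the individual $\|\sigma_{n,l}\|_{\mathrm{BV}^{(2)}}^p$ inside the sum, so the dimension factor you were worried about is exactly $\bigl(\sum_{n}\|\sigma_{n,l}\|_{\mathrm{BV}^{(2)}}^p\bigr)^{1/p}=\|\boldsymbol{\sigma}_l\|_{\mathrm{BV}^{(2)},p}$, yielding $\|\mathbf{f}_l(\boldsymbol{x}_1)-\mathbf{f}_l(\boldsymbol{x}_2)\|_p\le\|\mathbf{W}_l\|_{q,\infty}\|\boldsymbol{\sigma}_l\|_{\mathrm{BV}^{(2)},p}\|\boldsymbol{x}_1-\boldsymbol{x}_2\|_p$ (the case $p=\infty$ reads the same with a maximum in place of the sum). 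Composing these $L$ per-layer inequalities gives \eqref{LipschitzFinalBound} with the constant \eqref{LipschitzConstant}, whose finiteness yields Lipschitz continuity. With this one computation inserted your argument coincides with the paper's; without it, the central per-layer bound is asserted rather than established.
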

When the standard  Euclidean topology is assumed for the input and output spaces, Proposition \ref{Prop:LipEuc} provides an alternative bound for the Lipschitz constant of the neural network. The proof of Proposition \ref{Prop:LipEuc} is available in Appendix \ref{App:LipEuc}.
\begin{proposition}\label{Prop:LipEuc}
Let $\mathbf{f}_{\mathrm{deep}}:\mathbb{R}^{N_0}\rightarrow\mathbb{R}^{N_L}$ be a fully connected feed forward neural network with activations selected from $\mathrm{BV}^{(2)}(\mathbb{R})$.   For all   $\V x_1 , \V x_2 \in \mathbb{R}^{N_0}$ we have that 
\begin{align}\label{LipschitzFinalBound2}
\left\| \M f_{\rm deep}(\V x_1) - \M f_{\rm deep}(\V x_2)\right\|_2& \le  C_E \|\V x_1-\V x_2\|_2,
\end{align}
 where 
\begin{equation}\label{LipschitzConstant2}
C_E = \left( \prod_{l=1}^L  \|\mathbf{W}_{l}\|_{F} \right). \left( \prod_{l=1}^L \|\boldsymbol{\sigma}_l\|_{\mathrm{BV}^{(2)},1}\right).
\end{equation}
\end{proposition}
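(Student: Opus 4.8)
The plan is to mirror the composition argument used for Theorem \ref{Thm:LipschitzBV}, but to specialize the norm bookkeeping to the Euclidean ($p=2$) case. First I would recall that a composition of Lipschitz maps is Lipschitz with constant the product of the individual Lipschitz constants; hence it suffices to show that each layer $\M f_l = \boldsymbol{\sigma}_l \circ \M W_l$ is Lipschitz from $(\R^{N_{l-1}},\|\cdot\|_2)$ to $(\R^{N_l},\|\cdot\|_2)$ with constant at most $\|\M W_l\|_F \cdot \|\boldsymbol{\sigma}_l\|_{\mathrm{BV}^{(2)},1}$, and then multiply these bounds over $l=1,\dots,L$. The linear part is handled by the standard inequality $\|\M W_l \V u\|_2 \le \|\M W_l\|_{2\to 2}\,\|\V u\|_2 \le \|\M W_l\|_F\,\|\V u\|_2$, since the spectral norm is dominated by the Frobenius norm.

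The core step is the Euclidean Lipschitz bound for the pointwise nonlinear layer $\boldsymbol{\sigma}_l$. Given $\V z_1,\V z_2\in\R^{N_l}$, I would write, using Proposition \ref{BVLipschitz} coordinatewise,
\begin{equation*}
\|\boldsymbol{\sigma}_l(\V z_1)-\boldsymbol{\sigma}_l(\V z_2)\|_2^2 = \sum_{n=1}^{N_l} |\sigma_{n,l}(z_{1,n})-\sigma_{n,l}(z_{2,n})|^2 \le \sum_{n=1}^{N_l} \|\sigma_{n,l}\|_{\mathrm{BV}^{(2)}}^2\,|z_{1,n}-z_{2,n}|^2.
\end{equation*}
Bounding each $\|\sigma_{n,l}\|_{\mathrm{BV}^{(2)}}^2$ by $\max_n \|\sigma_{n,l}\|_{\mathrm{BV}^{(2)}}^2$ and pulling it out gives $\|\boldsymbol{\sigma}_l(\V z_1)-\boldsymbol{\sigma}_l(\V z_2)\|_2 \le \big(\max_n \|\sigma_{n,l}\|_{\mathrm{BV}^{(2)}}\big)\|\V z_1-\V z_2\|_2$, i.e. the layer is Lipschitz with constant $\|\boldsymbol{\sigma}_l\|_{\mathrm{BV}^{(2)},\infty}$. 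To land on the stated constant one then uses the trivial monotonicity of $\ell_p$ norms of the vector $(\|\sigma_{1,l}\|_{\mathrm{BV}^{(2)}},\dots,\|\sigma_{N_l,l}\|_{\mathrm{BV}^{(2)}})$, namely $\|\boldsymbol{\sigma}_l\|_{\mathrm{BV}^{(2)},\infty} \le \|\boldsymbol{\sigma}_l\|_{\mathrm{BV}^{(2)},1}$, which is exactly the quantity appearing in \eqref{LipschitzConstant2}.

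Finally I would assemble the telescoping estimate: for any $\V x_1,\V x_2\in\R^{N_0}$,
\begin{equation*}
\|\M f_{\mathrm{deep}}(\V x_1)-\M f_{\mathrm{deep}}(\V x_2)\|_2 \le \Big(\prod_{l=1}^L \|\M W_l\|_F\Big)\Big(\prod_{l=1}^L \|\boldsymbol{\sigma}_l\|_{\mathrm{BV}^{(2)},1}\Big)\|\V x_1-\V x_2\|_2,
\end{equation*}
by applying the per-layer bounds in sequence, which is precisely \eqref{LipschitzFinalBound2} with $C_E$ as in \eqref{LipschitzConstant2}. I do not anticipate a genuine obstacle here; the only mild subtlety is making sure the intermediate ($\ell_\infty$-type) constant $\|\boldsymbol{\sigma}_l\|_{\mathrm{BV}^{(2)},\infty}$ is correctly relaxed up to $\|\boldsymbol{\sigma}_l\|_{\mathrm{BV}^{(2)},1}$ so that the statement matches the ``$,1$'' subscript in the proposition, and keeping the Frobenius-vs-spectral norm step explicit. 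In fact this shows the proposition as stated is not tight (one could replace Frobenius by spectral norm and the $\ell_1$ activation norm by the $\ell_\infty$ one), but the looser form is what is invoked later, so I would prove it as written.
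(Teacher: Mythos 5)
Your proof is correct, but it takes a genuinely different route from the paper's. You factor each layer as the composition of the linear map and the diagonal nonlinearity and bound the two Lipschitz constants separately, via $\|\M W_l \V u\|_2 \le \|\M W_l\|_{2\to 2}\|\V u\|_2 \le \|\M W_l\|_F\|\V u\|_2$ and $\|\boldsymbol{\sigma}_l(\V z_1)-\boldsymbol{\sigma}_l(\V z_2)\|_2 \le \bigl(\max_n \|\sigma_{n,l}\|_{\mathrm{BV}^{(2)}}\bigr)\|\V z_1-\V z_2\|_2 \le \|\boldsymbol{\sigma}_l\|_{\mathrm{BV}^{(2)},1}\|\V z_1-\V z_2\|_2$. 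The paper instead works neuron by neuron without splitting the layer: it applies Proposition \ref{BVLipschitz} together with Cauchy--Schwarz to get $|\sigma_{n,l}(\M w_{n,l}^T\V x_1)-\sigma_{n,l}(\M w_{n,l}^T\V x_2)| \le \|\sigma_{n,l}\|_{\mathrm{BV}^{(2)}}\|\M w_{n,l}\|_2\|\V x_1-\V x_2\|_2$, then uses $\|\cdot\|_2\le\|\cdot\|_1$ on the layer output and a second Cauchy--Schwarz over the neurons to reach $\|\boldsymbol{\sigma}_l\|_{\mathrm{BV}^{(2)},2}\|\M W_l\|_F$, finally relaxed to the $\ell_1$ outer norm. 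Both yield the stated $C_E$; the intermediate bounds are incomparable (yours, $\|\M W_l\|_{2\to2}\max_n\|\sigma_{n,l}\|_{\mathrm{BV}^{(2)}}$, wins e.g.\ for near-orthogonal weights, while the paper's coupled sum $\sum_n \|\sigma_{n,l}\|_{\mathrm{BV}^{(2)}}\|\M w_{n,l}\|_2$ wins when large activation norms sit on small weight rows). What the paper's pairing of each activation with its own weight row buys is that the per-layer bound is expressed directly in the quantities penalized in \eqref{Eq:DeepCost} (Frobenius norms and per-neuron $\mathrm{BV}^{(2)}$ norms), which is what Remark \ref{Rem:lpNorm} and Theorem \ref{Thm:BVProp} exploit; your spectral-norm/$\ell_\infty$ refinement, while a legitimately tighter constant, is not of that form. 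Your closing observation that the stated constant is not tight is correct and consistent with Remark \ref{Rem:lpNorm}; the only cosmetic caveat is that $\|\boldsymbol{\sigma}_l\|_{\mathrm{BV}^{(2)},\infty}$ is not formally defined in the paper (the $(\mathrm{BV}^{(2)},p)$-norm is introduced for $p<\infty$), so you should define the max explicitly if you keep that notation.
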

\begin{remark}\label{Rem:lpNorm}
In Proposition \ref{Prop:LipEuc}, it is possible to replace the $\ell_1$ outer-norm of the nonlinear layers by  $\ell_p$ for any $p\in[1,\infty]$. This is due to the equivalence of norms in finite-dimensional vector-spaces. In general, such replacements come  at the cost of multiplying the Lipschitz bound by a constant. In the special case $p=2$, no constant is required and we achieve an even tighter bound (following \eqref{Eq:BoundEuclid} in Appendix \ref{App:LipEuc}). However,  we favour $\ell_1$ due to its globally sparsifying effect (see Section \ref{Subsec:L1vsL2} for a numerical illustration). 
\end{remark}
Proposition \ref{Prop:LipEuc} will take  a particular relevance in Section \ref{Sec:Activ}, where \eqref{LipschitzConstant2} will allow us to design a joint-optimization problem to learn  the linear weights and activations. Interestingly, the proposed minimization is compatible  with  the use of weight decay  \cite{krogh1992simple} in the training of neural networks (see \eqref{Eq:DeepCost} with $\mathrm{R}(\mathbf{W}) = \|\mathbf{W}\|_F^2$).  
\section{Learning Activations}
\label{Sec:Activ}
In this section, we propose a novel variational formulation to learn Lipschitz activations in a deep neural network. We select $\mathrm{BV}^{(2)}(\mathbb{R})$ as our search space to ensure the Lipschitz continuity of the input-output relation of the global network (see Theorem \ref{Thm:LipschitzBV}). 

Similarly to the RKHS theory, the (weak*) continuity of the sampling functional  is needed to guarantee the well-posedness of the learning problem.  This is  stated   in Theorem \ref{Thm:weakstarNN} whose proof can be found in Appendix \ref{App:NNweakstar}.  (We define the notion of weak*-convergence of neural networks in Appendix \ref{App:BV2Space}.)
 \begin{theorem}\label{Thm:weakstarNN}
For any $\boldsymbol{x}_0\in\mathbb{R}^{N_0}$, the sampling functional  $\delta_{\boldsymbol{x}_0}: \mathbf{f}_{\rm deep} \mapsto \mathbf{f}_{\rm deep} (\boldsymbol{x}_0)$ is weak*-continuous in the space of neural networks with second-order bounded-variation activations.  
\end{theorem}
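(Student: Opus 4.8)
The plan is to unfold the definition of weak*-continuity and reduce the claim to the continuity of a single neuron's response under weak* perturbations of its activation, then propagate this through the layers by composition. So, first I would recall from Appendix \ref{App:BV2Space} that a sequence of neural networks converges weak* to a limit network precisely when, layer by layer and neuron by neuron, the weight vectors $\mathbf{w}_{n,l}^{(k)}$ converge to $\mathbf{w}_{n,l}$ in $\mathbb{R}^{N_{l-1}}$ and the activations $\sigma_{n,l}^{(k)}$ converge weak* to $\sigma_{n,l}$ in $\mathrm{BV}^{(2)}(\mathbb{R})$ (i.e. $\mathrm{D}^2\sigma_{n,l}^{(k)} \to \mathrm{D}^2\sigma_{n,l}$ in the weak* topology of $\mathcal{M}(\mathbb{R})$, together with convergence of the two point-evaluation functionals $f\mapsto f(0)$ and $f\mapsto f(1)$ that appear in \eqref{Eq:BV2norm}). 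Given such a sequence, the goal is to show $\mathbf{f}_{\mathrm{deep}}^{(k)}(\boldsymbol{x}_0) \to \mathbf{f}_{\mathrm{deep}}(\boldsymbol{x}_0)$ for each fixed $\boldsymbol{x}_0$.

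The heart of the argument is the following observation: if $\sigma^{(k)} \to \sigma$ weak* in $\mathrm{BV}^{(2)}(\mathbb{R})$, then $\sigma^{(k)}(t_k) \to \sigma(t)$ whenever $t_k \to t$ in $\mathbb{R}$. To see this, write $\sigma(t) - \sigma(0) = t\,\sigma'(0^+) + \int \bigl( \text{kernel}(t,\tau) \bigr)\,\mathrm{d}(\mathrm{D}^2\sigma)(\tau)$ using the integral representation of $\mathrm{BV}^{(2)}$ functions recalled in Appendix \ref{App:BV2Space} — concretely, the Green's-function-type formula expressing $\sigma$ in terms of $\mathrm{D}^2\sigma$, $\sigma(0)$, and $\sigma(1)$. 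The weak* convergence of $\mathrm{D}^2\sigma^{(k)}$ against the (continuous, compactly-behaved-enough) test function $\tau \mapsto (t-\tau)_+$ and the convergence of the boundary values $\sigma^{(k)}(0), \sigma^{(k)}(1)$ then give $\sigma^{(k)}(t) \to \sigma(t)$ pointwise; the uniform Lipschitz bound $\lvert \sigma^{(k)}(t_k) - \sigma^{(k)}(t)\rvert \le \|\sigma^{(k)}\|_{\mathrm{BV}^{(2)}}\,\lvert t_k - t\rvert$ from Proposition \ref{BVLipschitz}, combined with the uniform boundedness of $\|\sigma^{(k)}\|_{\mathrm{BV}^{(2)}}$ (which follows from Banach--Steinhaus applied to the weak*-convergent, hence bounded, sequence), upgrades this to $\sigma^{(k)}(t_k) \to \sigma(t)$.

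With this neuron-level lemma in hand, the rest is a finite induction on the depth $l$. At layer $1$, the pre-activation of neuron $n$ is $(\mathbf{w}_{n,1}^{(k)})^T \boldsymbol{x}_0 \to \mathbf{w}_{n,1}^T \boldsymbol{x}_0$ by weight convergence, so by the lemma $\sigma_{n,1}^{(k)}\bigl((\mathbf{w}_{n,1}^{(k)})^T\boldsymbol{x}_0\bigr) \to \sigma_{n,1}\bigl(\mathbf{w}_{n,1}^T\boldsymbol{x}_0\bigr)$, i.e. $\mathbf{f}_1^{(k)}(\boldsymbol{x}_0) \to \mathbf{f}_1(\boldsymbol{x}_0)$. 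Assuming $\mathbf{f}_{l-1}^{(k)}\circ\cdots\circ\mathbf{f}_1^{(k)}(\boldsymbol{x}_0) \to \mathbf{f}_{l-1}\circ\cdots\circ\mathbf{f}_1(\boldsymbol{x}_0) =: \boldsymbol{z}$, the layer-$l$ pre-activations converge (weights converge, inputs converge, inner products are jointly continuous), and applying the lemma once more with the moving arguments $t_k = (\mathbf{w}_{n,l}^{(k)})^T \boldsymbol{z}^{(k)} \to \mathbf{w}_{n,l}^T\boldsymbol{z} = t$ propagates the convergence through layer $l$. After $L$ steps we obtain $\mathbf{f}_{\mathrm{deep}}^{(k)}(\boldsymbol{x}_0) \to \mathbf{f}_{\mathrm{deep}}(\boldsymbol{x}_0)$, which is the assertion. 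Since everything is metrizable on bounded sets (the relevant weak* balls are metrizable because $\mathcal{C}_0(\mathbb{R})$ is separable), sequential continuity suffices and the theorem follows.

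The main obstacle I anticipate is the neuron-level lemma, specifically justifying that the test function $\tau \mapsto (t-\tau)_+$ (or whatever Green's-function kernel the appendix uses) is an admissible test function for the weak* convergence $\mathrm{D}^2\sigma^{(k)} \to \mathrm{D}^2\sigma$ in $\mathcal{M}(\mathbb{R})$ — this function does not vanish at infinity and is not smooth, so one must either use a duality with a larger predual that the appendix has set up, or split $\sigma$ into a polynomial part plus a part whose second derivative is handled by truncation/approximation of the kernel by $\mathcal{C}_0$ functions, carefully tracking the boundary terms $\sigma^{(k)}(0)$ and $\sigma^{(k)}(1)$. Handling the "moving evaluation point" $t_k \to t$ rather than a fixed $t$ is the second subtlety, but that is cleanly dispatched by the uniform Lipschitz bound once uniform boundedness of the $\mathrm{BV}^{(2)}$-norms is established.
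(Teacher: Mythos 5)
Your proposal is correct and takes essentially the same route as the paper: extract pointwise convergence of each activation (hence of each layer) from the weak* convergence, obtain a uniform Lipschitz constant for the layers at large indices, and propagate through the composition by a triangle-inequality argument (which the paper packages as Lemma~\ref{lemma:Shayan} on composing pointwise-convergent sequences with uniformly Lipschitz outer maps). The only differences are in justification details: the paper simply cites the weak*-continuity of the sampling functional on $\mathrm{BV}^{(2)}(\mathbb{R})$ (property (i) of Appendix~\ref{App:BV2Space}) where you re-derive it from \eqref{Eq:RepBV} --- and your anticipated obstacle dissolves, since the kernel $h(t,\cdot)$ there is continuous and compactly supported, hence an admissible test function in the $\mathcal{C}_0$--$\mathcal{M}$ duality --- while your appeal to Banach--Steinhaus for the uniform bound on $\|\sigma^{(k)}\|_{\mathrm{BV}^{(2)}}$ is, if anything, a cleaner justification than the paper's.
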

Given the data-set $(X,Y)$ of size $M$ that consists in the pairs $(\V x_m, \V y_m)\in \mathbb{R}^{N_0}\times\mathbb{R}^{N_L}$ for $m=1,2,\ldots,M$, we then consider the following cost functional
\begin{align} \label{Eq:DeepCost}
\mathcal{J}(\mathbf{f}_{\rm deep};X,Y) = & \sum_{m=1}^M \mathrm{E}\big(\V y_m,\M f_{\rm deep}(\V x_m)\big) +  \sum_{l=1}^L \mu_l   \mathrm{R}_l( \M W_{l}) \nonumber \\ &
\mbox{}+ \sum_{l=1}^L \lambda_l  \|\boldsymbol{\sigma}_{l}\|_{\mathrm{BV}^{(2)},1},
\end{align}
where $\mathbf{f}_{\rm deep}$ is a neural network with linear layers $\mathbf{W}_{l}$ and nonlinear layers $\boldsymbol{\sigma}_{l}=(\sigma_{1,l},\ldots,\sigma_{N_l,l})$, as specified in \eqref{Eq:DNN} and \eqref{Eq:DNNLayer},  $\mathrm{E}(\cdot,\cdot)$ is an arbitrary loss function, and  $ \mathrm{R}_l:\mathbb{R}^{N_{l}\times N_{l-1}}\rightarrow\mathbb{R}$ is a regularization functional for the linear weights of the $l$th layer. The standard choice for weight regularization is  the Frobenius norm $\mathrm{R}(\mathbf{W}) = \|\mathbf{W}\|_F^2$, which corresponds to weight decay scheme  in deep learning.  Finally, the positive constants $\mu_l,\lambda_{l}>0$ balance the regularization effect in the training step.

Theorem \ref{Thm:Main} states that, under some natural conditions, there   always exists a solution of \eqref{Eq:DeepCost} with continuous piecewise-linear activation functions, which we refer to as a {\it deep-spline} neural network.  The proof  is given in Appendix \ref{App:Main}. 

\begin{theorem}\label{Thm:Main}
Consider the training of a deep neural network via the minimization 
\begin{align}\label{Pb:DeepSpline}
 { \min_{\substack{ \mathbf{w}_{n,l}\in\mathbb{R}^{N_{l-1}}, \\ \sigma_{n,l} \in {\rm BV}^{(2)}(\R)}}} 
& \mathcal{J}(\mathbf{f}_{\rm deep};X,Y),
\end{align}
   where $\mathcal{J}(\cdot;X,Y)$ is defined in \eqref{Eq:DeepCost}. Moreover, assume that  the loss function  $\mathrm{E}(\cdot,\cdot)$ is  proper,  lower semi-continuous, and  coercive. Assume that the regularization functionals $\mathrm{R}_{l}$ are continuous, and  coercive. Then,  there   always exists a solution $\mathbf{f}_{\mathrm{deep}}^*$ of  \eqref{Eq:DeepCost} with   activations $\sigma_{n,l}$ of the form 
\begin{equation}\label{Eq:ActivationDeepSpline}
\sigma_{n,l}(x)= \sum_{k=1}^{K_{n,l}} a_{n,l,k} \mathrm{ReLU}(x-\tau_{n,l,k}) + b_{1,n,l} x+ b_{2,n,l},
\end{equation}
where $K_{n,l} \leq M$ and, $a_{n,l,k},\tau_{n,l,k},b_{\cdot,n,l}\in\mathbb{R}$ are adaptive parameters.  
\end{theorem}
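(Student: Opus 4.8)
The plan is to reduce the infinite-dimensional joint minimization to a finite-dimensional one by fixing the linear weights and analyzing the activations separately, and then to invoke compactness to secure existence of a minimizer. First I would observe that the cost functional $\mathcal{J}$ decouples nicely once the linear layers $\M W_l$ are held fixed: the only terms involving $\sigma_{n,l}$ are the data-fidelity term $\sum_m \mathrm{E}(\V y_m, \M f_{\rm deep}(\V x_m))$ and the regularization $\sum_l \lambda_l \|\boldsymbol{\sigma}_l\|_{{\rm BV}^{(2)},1}$. For the data-fidelity term, each sample $\V x_m$ propagates through the network and, at each neuron $(n,l)$, the activation $\sigma_{n,l}$ is evaluated at finitely many points --- namely the pre-activation values $t_{n,l,m} = \M w_{n,l}^T (\text{output of layer } l-1 \text{ on input } \V x_m)$, for $m=1,\dots,M$. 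Crucially, these evaluation points depend on $\sigma_{n,l}$ only through activations in \emph{earlier} layers, so one can process the network layer by layer, from $l=1$ to $l=L$. At layer $l$, with all earlier activations already fixed to be of the claimed form, the pre-activation sample locations for each neuron are determined constants; the contribution of $\sigma_{n,l}$ to the entire cost is then a function of the finite sample $(\sigma_{n,l}(t_{n,l,1}),\dots,\sigma_{n,l}(t_{n,l,M}))$ plus the penalty $\lambda_l \|\sigma_{n,l}\|_{{\rm BV}^{(2)}}$.

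The second step is to apply the representer theorem for ${\rm BV}^{(2)}$ regularization (the scalar result of \cite{Unser2018}, whose relevant properties are recalled in Appendix \ref{App:BV2Space}) neuron by neuron. Concretely, for a fixed neuron $(n,l)$, among all $\sigma \in {\rm BV}^{(2)}(\R)$ achieving prescribed values at the $M$ points $t_{n,l,1},\dots,t_{n,l,M}$, there is one minimizing ${\rm TV}^{(2)}(\sigma)$ that is a linear spline with knots contained in $\{t_{n,l,m}\}$, hence of the form \eqref{Eq:ActivationDeepSpline} with $K_{n,l}\le M$; the ${\rm BV}^{(2)}$-norm (which only adds $|\sigma(0)|+|\sigma(1)|$, values that are also fixed by the chosen data values if $0,1$ are among the sample points, or otherwise do not increase under the replacement by the canonical interpolant) is not increased. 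Replacing $\sigma_{n,l}$ by this spline interpolant leaves the data term unchanged (same values at the sample points, hence the same downstream propagation --- this is why the layer-by-layer order matters) and does not increase the regularization term. Iterating over all neurons and all layers, any candidate network can be converted, without increasing $\mathcal{J}$, into a deep-spline network of the stated form.

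The third step is to establish that the infimum is actually attained. Having restricted to deep-spline activations with at most $M$ knots each, the whole problem now depends on finitely many real parameters: the weight matrices $\M W_l$ and, per neuron, the coefficients $a_{n,l,k}$, knots $\tau_{n,l,k}$, and affine parameters $b_{1,n,l},b_{2,n,l}$. I would take a minimizing sequence and argue precompactness: coercivity of the $\mathrm{R}_l$ forces the $\M W_l$ to stay bounded, coercivity of $\mathrm{E}$ together with the $\lambda_l\|\cdot\|_{{\rm BV}^{(2)},1}$ penalty controls the ${\rm TV}^{(2)}$ and the boundary values $|\sigma_{n,l}(0)|,|\sigma_{n,l}(1)|$, hence bounds $\|\sigma_{n,l}\|_{{\rm BV}^{(2)}}$; this gives weak* precompactness of the activations. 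By the weak*-continuity of the sampling functional $\delta_{\V x_0}$ on the space of neural networks (Theorem \ref{Thm:weakstarNN}) the data term passes to the limit (using lower semicontinuity of $\mathrm{E}$), the ${\rm BV}^{(2)}$-norm is weak*-lower-semicontinuous, and $\mathrm{R}_l$ is continuous; so the limiting network achieves the infimum. Finally, apply the conversion of steps one and two to this minimizer to obtain one of the deep-spline form \eqref{Eq:ActivationDeepSpline}.

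The main obstacle I anticipate is handling the \emph{order of operations} carefully in the layer-by-layer reduction: the sample locations $t_{n,l,m}$ at layer $l$ are not fixed a priori but depend on the activations chosen in layers $1,\dots,l-1$, so the argument must fix earlier layers before invoking the representer theorem at layer $l$, and one must check that replacing $\sigma_{n,l}$ by an interpolant genuinely leaves \emph{all} downstream pre-activations (for every sample, at every later neuron) unchanged. A secondary technical point is the treatment of the boundary terms $|f(0)|+|f(1)|$ in the ${\rm BV}^{(2)}$-norm within the per-neuron representer argument --- one needs the canonical linear-spline interpolant through the $M$ data points to not increase these, which holds because the interpolant lies in the convex hull of behaviors and the scalar result of \cite{Unser2018} is stated for exactly this norm; this should be routine given the appendix, but it must be spelled out.
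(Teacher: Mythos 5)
Your overall plan coincides with the paper's proof, just with the two halves in reverse order: the paper first proves existence of a minimizer of the full infinite-dimensional problem (coercivity of $\mathrm{E}$ and $\mathrm{R}_l$ turns \eqref{Pb:DeepSpline} into an equivalent constrained problem, Banach--Alaoglu gives weak*-compactness of the feasible set, and Theorem \ref{Thm:weakstarNN} gives weak*-lower semicontinuity of $\mathcal{J}$, so the generalized Weierstrass theorem applies), and only then replaces each activation of that minimizer by a $\mathrm{TV}^{(2)}$-optimal interpolant of its propagated pre-/post-activation values, invoking Lemma 1 of \cite{Unser2018}. You do the replacement argument first and compactness second; note that your ``finite-dimensional'' step three in fact still rests on weak*-compactness in $\mathrm{BV}^{(2)}(\mathbb{R})$ (knot locations are not confined to a compact set, so the parametrization by itself does not give compactness), after which you must re-apply the conversion to the weak*-limit --- which is exactly the paper's order of operations. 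So, modulo presentation, the route is the same.

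There is, however, one genuine gap in your per-neuron replacement step: the boundary terms $|\sigma(0)|+|\sigma(1)|$ of the $\mathrm{BV}^{(2)}$-norm. You assert that the canonical $\mathrm{TV}^{(2)}$-minimal interpolant through the $M$ sample points ``does not increase'' these values because it ``lies in the convex hull of behaviors''; this is false in general. The $\mathrm{TV}^{(2)}$-minimal linear-spline interpolant only matches the original activation at the sample points and extrapolates affinely outside their range, so its values at $0$ and $1$ can have strictly larger magnitude than those of the original activation; then the replacement can strictly increase $\|\sigma_{n,l}\|_{\mathrm{BV}^{(2)}}$ even while decreasing $\mathrm{TV}^{(2)}$, and the ``cost does not increase'' claim breaks. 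The paper closes this hole by adding $\sigma(0)=\tilde\sigma_{n,l}(0)$ and $\sigma(1)=\tilde\sigma_{n,l}(1)$ as two extra interpolation constraints in \eqref{FeasibleCond}, so that the boundary contribution is frozen; Lemma 1 of \cite{Unser2018} applied to these $\tilde M=M+2$ constraints then yields a spline with at most $\tilde M-2=M$ knots, which is precisely where the bound $K_{n,l}\le M$ in \eqref{Eq:ActivationDeepSpline} comes from. As written, your count is also internally inconsistent: with only $M$ interpolation constraints the lemma would give at most $M-2$ knots, not $M$, and the knots need not sit at the sample points. With the two extra constraints added, your argument goes through and matches the paper's.
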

 Theorem \ref{Thm:Main}   suggests an optimal ReLU-based parametric to learn  activations. This is a remarkable property as it translates the original infinite-dimensional problem  \eqref{Pb:DeepSpline} into a finite-dimensional parametric optimization, where one only needs to determine the ReLU weights $a_{n,l,k}$ and positions $\tau_{n,l,k}$ together with the affine terms  $b_{1,n,l},b_{2,n,l}$.  Let us also mention that the baseline ReLU network  and its variations (PReLU, LeakyReLU) are all included in this scheme as special cases of an activation of the form \eqref{Eq:ActivationDeepSpline} with $K=1$. 

A similar result  has been shown in the deep-spline representer theorem of Unser in \cite{Unser2018}. However, there are  three fundamental differences. Firstly, we relax the assumption of having normalized weights due to the practical considerations and the optimization challenges it brings.  Secondly, we slightly modify the regularization functional that enables us to control the global Lipschitz constant of the neural network. Lastly, we show the existence of a minimizer in our proposed variational formulation that is, to the best of our knowledge, the first result of existence in this framework. 

We remark that the choice of our regularization restrains the coefficients  $b_{1,n,l}$ and $b_{2,n,l}$ from taking high values. This   enables  us to obtain the global bound \eqref{LipschitzConstant} for the Lipschitz constant of the network, as opposed to the framework of \cite{Unser2018}, where only a semi-norm has been used for the regularization. The payoff is that, in \cite{Unser2018}, the activations have at most $(M-2)$  knots, which are the junctions between the consecutive linear pieces of a piecewise linear function, while our bound is $K_{n,l}\leq M$. This is the price to pay for controlling the Lipschitz regularity of the network. However,  this is inconsequential   in practice since there are usually much fewer knots than data points, because of the regularization penalty. The latter is   justified through the computation of the ${\rm BV}^{(2)}$ norm of an activation of the form \eqref{Eq:ActivationDeepSpline}. It yields
\begin{equation}\label{Eq:BV2Activ}
\|\sigma_{n,l}\|_{\mathrm{BV}^{(2)}} = \|\boldsymbol{a}_{n,l}\|_{1} + |\sigma(1) | + |\sigma(0)|,
\end{equation} 
where $\boldsymbol{a}_{n,l}=(a_{n,l,1},\ldots,a_{n,l,K_{n,l}})$ is the vector of ReLU coefficients. This shows that the ${\rm BV}^{(2)}$-regularization imposes  an   $\ell_1$ penalty on the ReLU weights in the expansion \eqref{Eq:ActivationDeepSpline}, thus promoting sparsity   \cite{donoho2006most}. In Section \ref{Sec:Numerical}, we illustrate the sparsity-promoting effect of the $\mathrm{BV}^{(2)}$-norm with  numerical examples (see Figures \ref{Fig:Activations} and \ref{Fig:VsLambda}). 

Another interesting property of the variational formulation \eqref{Pb:DeepSpline} is the relation between the energy of consecutive linear and nonlinear layers. In Theorem \ref{Thm:BVProp}, we exploit this relation. Its proof can be found in  Appendix \ref{App:BVProp}.
\begin{theorem}\label{Thm:BVProp}
Consider Problem \eqref{Pb:DeepSpline} with the weight regularization $\mathrm{R}_l(\mathbf{W}_l) = \|\mathbf{W}_l\|_F^2$ and positive parameters $\mu_l,\lambda_l >0$ for all $l=1,2,\ldots,L$. Then, for any  of its local minima with the linear layers $\mathbf{W}_{l}$ and nonlinear layers $\boldsymbol{\sigma}_{l}$, we have that
\begin{equation}\label{BVproplayer}
\lambda_{l} \| \boldsymbol{\sigma}_{l}\|_{\mathrm{BV}^{(2)},1} = 2 \mu_{l+1} \| \mathbf{W}_{l+1}\|_F^2, \quad l=1,2,\ldots,L-1.
\end{equation}
\end{theorem}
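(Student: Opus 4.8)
The plan is to exploit a scaling/homogeneity argument: at a local minimum, perturbing the network along a transformation that leaves the input–output map $\M f_{\rm deep}$ unchanged (and hence leaves the data-fidelity term $\sum_m \Op E(\V y_m, \M f_{\rm deep}(\V x_m))$ untouched) must not decrease the regularization. For a fixed pair of consecutive layers $l$ and $l+1$, the relevant invariance is that replacing $\boldsymbol\sigma_l$ by the rescaled nonlinearity $x \mapsto \boldsymbol\sigma_l(x)/t$ applied \emph{componentwise}—more precisely, replacing each $\sigma_{n,l}(\cdot)$ by $\frac{1}{t}\sigma_{n,l}(\cdot)$—and simultaneously replacing $\M W_{l+1}$ by $t\,\M W_{l+1}$ leaves $\M f_{l+1}\circ\M f_l$ invariant, because each activation feeds linearly into the next layer's weight matrix. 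Here I must check that $\frac{1}{t}\sigma_{n,l}\in \mathrm{BV}^{(2)}(\R)$ whenever $\sigma_{n,l}$ is, with $\|\frac{1}{t}\sigma_{n,l}\|_{\mathrm{BV}^{(2)}} = \frac{1}{t}\|\sigma_{n,l}\|_{\mathrm{BV}^{(2)}}$, which is immediate from the definition \eqref{Eq:BV2norm} since $\mathrm{TV}^{(2)}$, $|\sigma(0)|$, and $|\sigma(1)|$ are all positively homogeneous of degree one. Note also that scaling the activations this way does not affect layers $l-1$ or earlier, nor layers $l+2$ or later, nor the weights $\M W_l$.

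Next I would write out the part of the cost $\mathcal J$ that depends on $t$. Only two terms vary: the weight-regularization term $\mu_{l+1}\,\Op R_{l+1}(\M W_{l+1}) = \mu_{l+1}\,\|t\,\M W_{l+1}\|_F^2 = t^2\,\mu_{l+1}\,\|\M W_{l+1}\|_F^2$, and the activation-regularization term $\lambda_l\,\|\boldsymbol\sigma_l\|_{\mathrm{BV}^{(2)},1} = \frac{1}{t}\,\lambda_l\,\|\boldsymbol\sigma_l\|_{\mathrm{BV}^{(2)},1}$ (using the $p=1$ outer norm, which is additive over $n$, so the componentwise scaling pulls out a single factor $1/t$). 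Thus, writing $A = \lambda_l\,\|\boldsymbol\sigma_l\|_{\mathrm{BV}^{(2)},1}$ and $B = \mu_{l+1}\,\|\M W_{l+1}\|_F^2$ for the values at the local minimum ($t=1$), the function $g(t) = \frac{A}{t} + t^2 B + (\text{const})$ must have a local minimum at $t=1$ over $t>0$. Setting $g'(1) = -A + 2B = 0$ gives $A = 2B$, which is exactly \eqref{BVproplayer}: $\lambda_l\,\|\boldsymbol\sigma_l\|_{\mathrm{BV}^{(2)},1} = 2\mu_{l+1}\,\|\M W_{l+1}\|_F^2$. One subtlety: if $A = 0$ then $g$ is increasing and $t=1$ is not a local minimum unless $B=0$ too—but $A=0$ forces $B=0$ via $g'(1^-)\le 0\le g'(1^+)$ (the stationarity must hold as a genuine local-min condition at an interior point $t=1>0$), and conversely, so the identity $A=2B$ holds in all cases, including degenerate ones.

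The main obstacle I anticipate is not the computation but the bookkeeping needed to justify that the perturbed network is admissible and genuinely leaves everything else fixed: I need $t>0$ arbitrary (so both directions $t\to 1^\pm$ are available for the local-minimum inequality), the rescaled activations to stay in the feasible set $\mathrm{BV}^{(2)}(\R)$ with the claimed norm scaling, and the composition identity $\boldsymbol\sigma_l^{(t)} \circ \M W_l$ followed by $(t\M W_{l+1})$ followed by $\boldsymbol\sigma_{l+1}$ to reproduce the original layer map exactly—this last point uses that the nonlinearity acts pointwise before the next linear map, so $\boldsymbol\sigma_{l+1}\big((t\M W_{l+1})\,\tfrac{1}{t}\boldsymbol\sigma_l(\M W_l \V x)\big) = \boldsymbol\sigma_{l+1}\big(\M W_{l+1}\,\boldsymbol\sigma_l(\M W_l \V x)\big)$. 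Once these are in place, the derivative condition $g'(1)=0$ (together with the boundary reasoning in the degenerate case) yields the claim for each $l=1,\dots,L-1$; the restriction $l\le L-1$ is exactly because there is no layer $L+1$ whose weights could absorb the rescaling of $\boldsymbol\sigma_L$.
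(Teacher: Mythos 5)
Your proposal is correct and follows essentially the same route as the paper: the paper perturbs the pair $(\boldsymbol{\sigma}_{l^*},\mathbf{W}_{l^*+1})$ by the scaling $((1+\epsilon)^{-1}\boldsymbol{\sigma}_{l^*},(1+\epsilon)\mathbf{W}_{l^*+1})$, notes the input--output map and all other cost terms are unchanged, and extracts the balance condition $\lambda_{l}\|\boldsymbol{\sigma}_{l}\|_{\mathrm{BV}^{(2)},1}=2\mu_{l+1}\|\mathbf{W}_{l+1}\|_F^2$ from local optimality. The only cosmetic difference is that the paper argues via the sign of $\epsilon\,g(\epsilon)$ and continuity rather than your derivative condition $g'(1)=0$, which is equivalent.
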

Theorem \ref{Thm:BVProp} shows that    the regularization constants $\mu_{l}$ and $\lambda_{l}$ provide a balance between the linear and nonlinear layers.  In our experiments, we use the outcome of this theorem to determine the value of $\lambda_l$. More precisely, we select $\lambda$  such that  \eqref{BVproplayer} holds in the initial setup. This is relevant in practice as it reduces the number of hyper-parameters that one needs to tune and results  in a faster training scheme. We also show experimentally that this choice of $\lambda$ is desirable.

\section{Numerical Illustrations}
\label{Sec:Numerical}
In this section, we discuss the practical aspects of our framework and conduct    numerical experiments  in which  we compare the performance of our method to ReLU neural networks and  its variants:   LeakyReLU and PReLU activations. 

According to Theorem \ref{Thm:Main}, one can  translate the  original infinite-dimensional problem \eqref{Pb:DeepSpline} to the minimization 
\begin{align}\label{Eq:DiscretizedLearning}
 \min_{\substack{ \mathbf{w}_{n,l}\in\mathbb{R}^{N_{l-1}}, \\ {\V a}_{n,l}\in \R^{K_{n,l}}\\ b_{i,n,l} \in \R }} &  \sum_{m=1}^M \mathrm{E}\big(\V y_m,\M f_{\rm deep}(\V x_m)\big) +  \sum_{l=1}^L  \mu_l\sum_{n=1}^{N_l}  \|{\bf  w}_{n,l}\|_2^2 \nonumber \\ &
\mbox{}+ \sum_{l=1}^L  \lambda_l \sum_{n=1}^{N_l}  \left(\|\V a_{n,l} \|_1 + |\sigma_{n,l}(1)|+|\sigma_{n,l}(0)|\right),
\end{align}
where ${\bf f}_{\rm deep}$ is the global input-output mapping and $\sigma_{n,l}$ follows the parametric form  given in \eqref{Eq:ActivationDeepSpline}. The optimization hence is over a set of finitely many variables, namely,  the linear weights $\mathbf{w}_{n,l}$ and the unknown parameters of $\sigma_{n,l}$ in \eqref{Eq:ActivationDeepSpline} for each neuron $(n,l)$. The main challenge is that the number $K_{n,l}$ of ReLUs in the representation \eqref{Eq:ActivationDeepSpline} is unknown {\it a priori}. To overcome this issue, we  fix  $K_{n,l}$ to a large value (we took $K=21$ in our experiments) and rely on the sparsifying effect of ${\rm BV}^{(2)}$ regularization to promote a sparse expansion in the ReLU basis and remove  the nonessential ReLUs.  Thus, one may use the standard optimization schemes such as SGD or  ADAM to learn the activations, jointly with the other parameters of the network. At the end of training, we perform a sparsifying step in which  we annihilate some ReLU coefficients  that are selected in such a way that the training error changes less than 1 percent. 

Let us mention that the parametric form of the deep spline activation function has linear dependencies to its parameters ${\V a}_{n,l}\in \R^{K_{n,l}}$ and $b_{i,n,l} \in \R,i=1,2$. However, this does not reduce the global optimization problem \eqref{Eq:DiscretizedLearning} to the learning of a linear classifier (or regressor). Indeed, for a fixed data point $\boldsymbol{x}_m$, the quantity ${\bf f}_{\rm deep}(\boldsymbol{x}_m)$ in general has nonlinear (and even nonconvex) dependencies to the parameters ${\V a}_{n,l}\in \R^{K_{n,l}}$ and $b_{i,n,l} \in \R,i=1,2$ of the activation function of the neuron $(n,l)$.
\subsection{Setup}\label{Sec:numset}
We designed a simple experiment in which the goal is to classify points that are inside a  circle of area 2 centred at the origin. This is a classical two-dimensional supervised-learning problem, where the target function is 
\begin{equation}
\mathbbm{1}_{\text{Circle}}(x_1,x_2) = \begin{cases}1, & x_1^2+x_2^2 \leq  \frac{2}{\pi} \\ 0, & \text{otherwise}.\end{cases} 
\end{equation}
The training dataset is obtained by generating $M=1000$ random points from a uniform distribution on  $[-1,1]^2$. The points that lie inside and outside of the circle are then labeled as 1 and 0, respectively.  

To illustrate the   effect  of our proposed scheme, we consider a family of fully connected architectures with layer descriptors of the form $(2,2W,1)$, where the width parameter $W\in\mathbb{N}$ governs the complexity of the architecture. We follow  the classical choice  of using a sigmoid activation in the last layer, together with the binary cross-entropy loss  
\begin{equation}
\mathrm{E}(y,\widehat{y}) = - y \log(\widehat{y}) - (1-y) \log(1-\widehat{y}).
\end{equation}
 We take $\mu_1=\mu_2 =\mu$   and, in each scheme, we tune the single hyper-parameter $\mu>0$. 
 
 In our Lipschitz-based design,  we use  Xavier's rule \cite{glorot2010understanding} to initialize the linear weights. For the activations, we consider the simple piecewise-linear functions {\it absolute value} and {\it soft-thresholding},  defined as 
\begin{align}
f_{\rm abs}(x)& = \begin{cases} x, & x\geq 0 \\ -x, & x < 0,\end{cases} \\
 f_{\rm soft} (x) &=\begin{cases} x-\frac{1}{2}, & x\geq \frac{1}{2}\\ 0, & x \in (-\frac{1}{2},\frac{1}{2}) \\ x + \frac{1}{2}, & x\leq -\frac{1}{2}.\end{cases}
\end{align} 
We then initialize half of the activations with $f_{\rm abs}$ and the other half with $ f_{\rm soft} $. Intuitively, such initializations may allow the network to be flexible to both even and odd functions. 

Moreover, we deploy Theorem \ref{Thm:BVProp} to tune the parameter $\lambda$. A direct calculation reveals that 
\begin{equation}
\|f_{\rm abs}\|_{\rm BV^{(2)}} = 3, \qquad \|f_{\rm soft}\|_{\rm BV^{(2)}} = \frac{5}{2}.
\end{equation}
This allows us to tune $\lambda$ so that the optimality condition \eqref{BVproplayer}   holds in the initial setup. Due to the Xavier initialization,   the linear weights of the second layer have variance $\sigma^2 = 2/(2W+1)$. Therefore,  we obtain that
\begin{equation}\label{Eq:LmabdaTune}
\lambda = \frac{16}{11(2W+1)} \mu. 
\end{equation}
 For an informed comparison, we also count the total number of parameters that is used in each scheme to represent the learned function. More specifically, with the layer descriptor $(2,2W,1)$, there are $6W$ linear weights and  one bias for the last (sigmoidal) activation. In addition, there are parameters that depend  on the specific activation we are using: There is a bias parameter in ReLU and  LeakyReLU activations. In addition to bias, PReLU activation has an extra parameter (the slope in the negative part of the real line) as well and finally, the number of parameters in our scheme is adaptive and is equal to the number of active ReLUs plus the null-space coefficients  in the representation \eqref{Eq:ActivationDeepSpline}.

\subsection{Comparison with ReLU-Based Activations}
We display in Figure \ref{Fig:Circle}  the learned function $f:\mathbb{R}^2\rightarrow\mathbb{R}$ in each case.   We also disclose in Table \ref{Table:AreaClassification} the performance and the number of active parameters of each scheme. One verifies that our scheme, already in the simplest configuration with  layer descriptor $(2,2,1)$, outperforms all other methods, even when they are deployed over the richer architecture $(2,10,1)$. Moreover, there are  fewer parameters in the final representation of the target function in our scheme, as compared to the other methods. This experiment shows that the learning of activations in simple architectures is beneficial as it compensates the low capacity of the network and contributes to the generalization power of the global learning scheme. 
 
\begin{figure}[t]
\begin{minipage}{1.0\linewidth}
  \centering
  \centerline{\includegraphics[width=\linewidth]{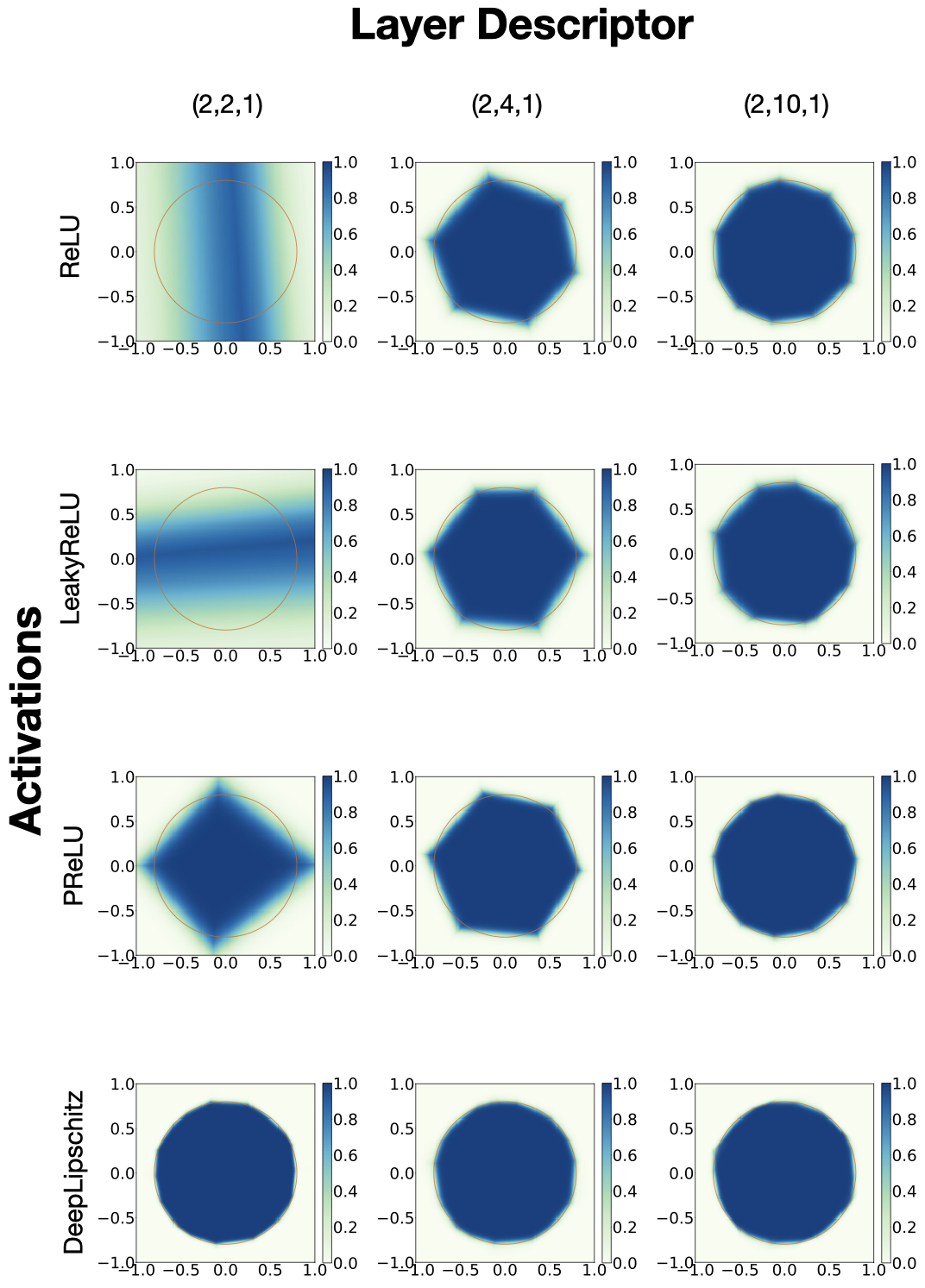}}
  \caption{Area classification with different activations, namely ReLU, LeakyReLU, PReLU, and our proposed scheme, which we refer to as {\it Deep Lipschitz}. In each case, we consider  $W=1,2,5$  hidden neurons. }
  \label{Fig:Circle} \medskip
\end{minipage}
\end{figure} 
 In the minimal case $W=1$, we expect the network to learn parabola-type activations. This is due to the fact that the target function can be represented as 
\begin{equation}
\mathbbm{1}_{\text{Circle}}(x_1,x_2) = \mathbbm{1}_{[0, \frac{2}{\pi}]}(x_1^2+x_2^2),
\end{equation}
which is the composition of the sum of two parabolas and a threshold function. To verify this intuition, we have also plotted the learned activations for the case $W=1$ in Figure \ref{Fig:Activations}.

\subsection{Sparsity-Promoting Effect of $\mathrm{BV}^{(2)}$-Regularization}
 Despite   allowing a large number of ReLUs in the expansion \eqref{Eq:ActivationDeepSpline} ($K=21$), the learned activations (see Figure \ref{Fig:Activations}) have sparse expansion in the ReLU basis. This is due to the sparsity-promoting effect of the $\mathrm{BV}^{(2)}$-norm on the ReLU coefficients and also the thresholding step that we added at the end of training. 

\begin{figure}[t]
\begin{minipage}{1.0\linewidth}
  \centering
  \centerline{\includegraphics[width=0.95\linewidth]{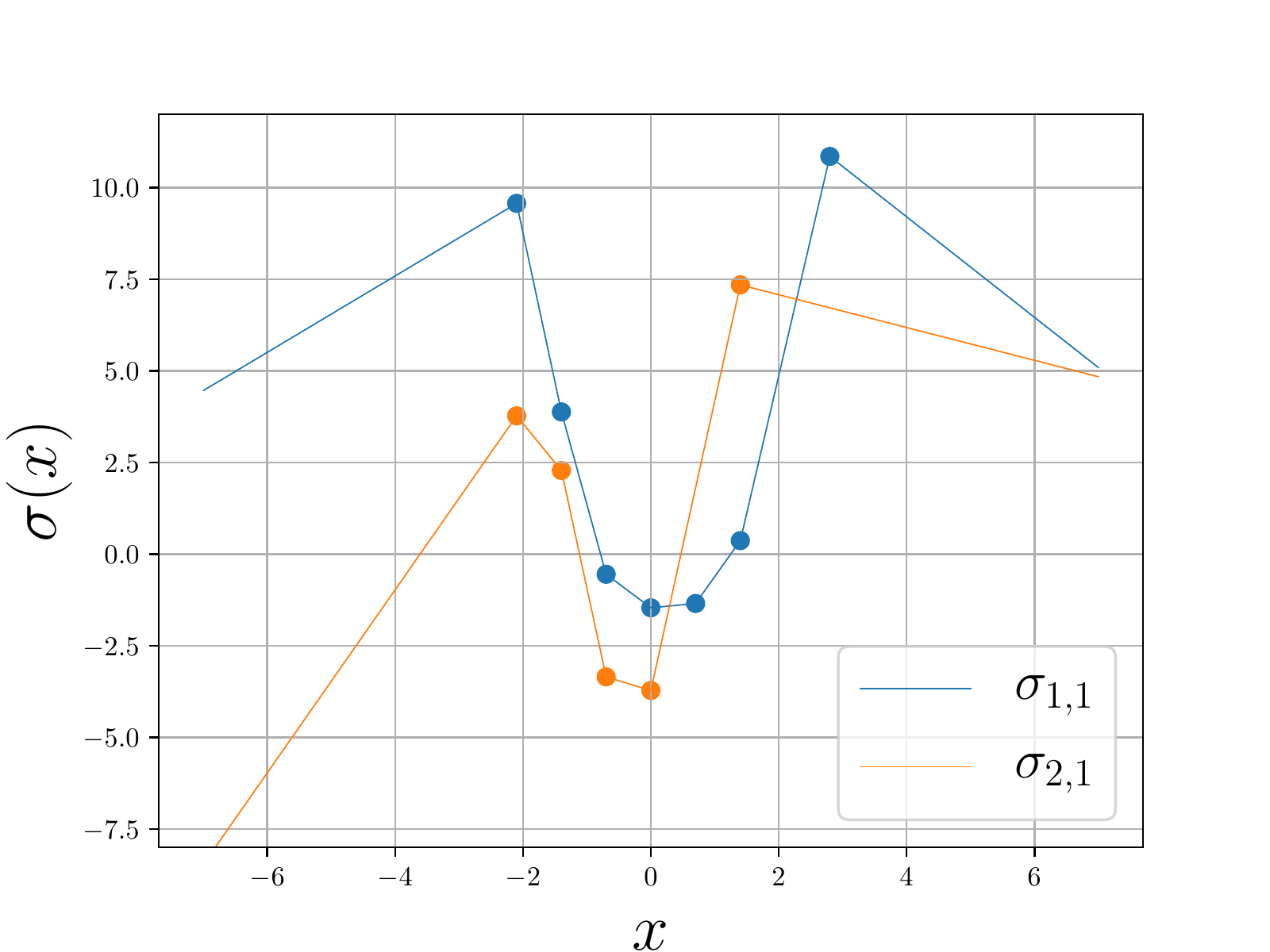}}
  \caption{Learned activations in the area classification experiment for a simple network with layer descriptor $(2,2,1)$.}
  \label{Fig:Activations} \medskip
\end{minipage}
\end{figure}
\subsection{Effect of the Parameter $\lambda$}
To investigate the effect of the parameter $\lambda$ in our experiments, we have set the weight decay parameter to $\mu=10^{-4}$ and  plotted in Figure \ref{Fig:VsLambda} the error rate, our proposed Lipschitz bound, and the total number of active ReLUs versus $\lambda$. As expected, the sparsity and Lipschitz regulariy of the network increases with $\lambda$. Consequently,  one can control the overall regularity/complexity of the network by tuning this parameter. 

\begin{figure}[t]
\begin{minipage}{\linewidth}
\centering
\centerline{\includegraphics[width=0.95\linewidth]{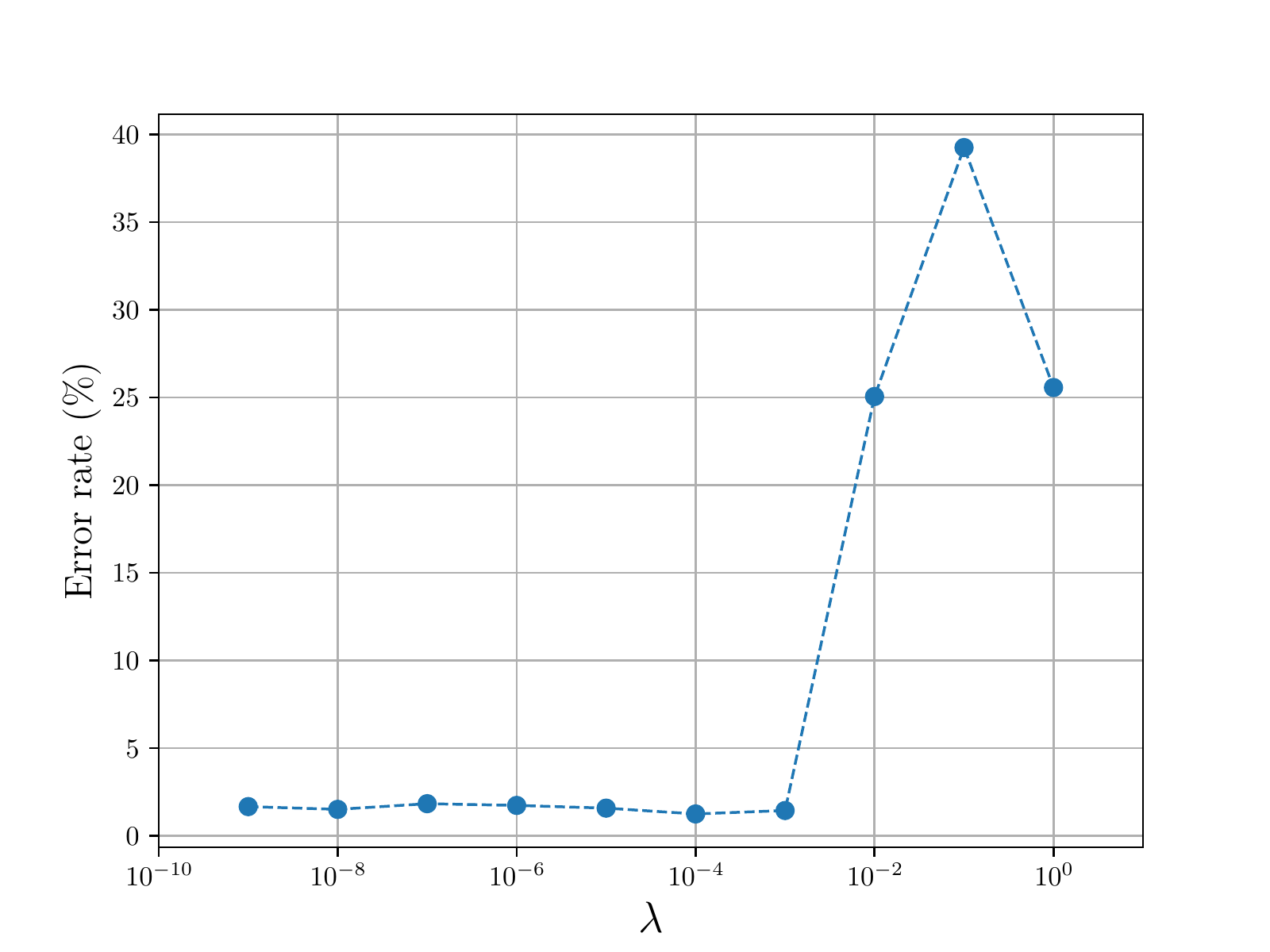}}
\centerline{\includegraphics[width=0.95\linewidth]{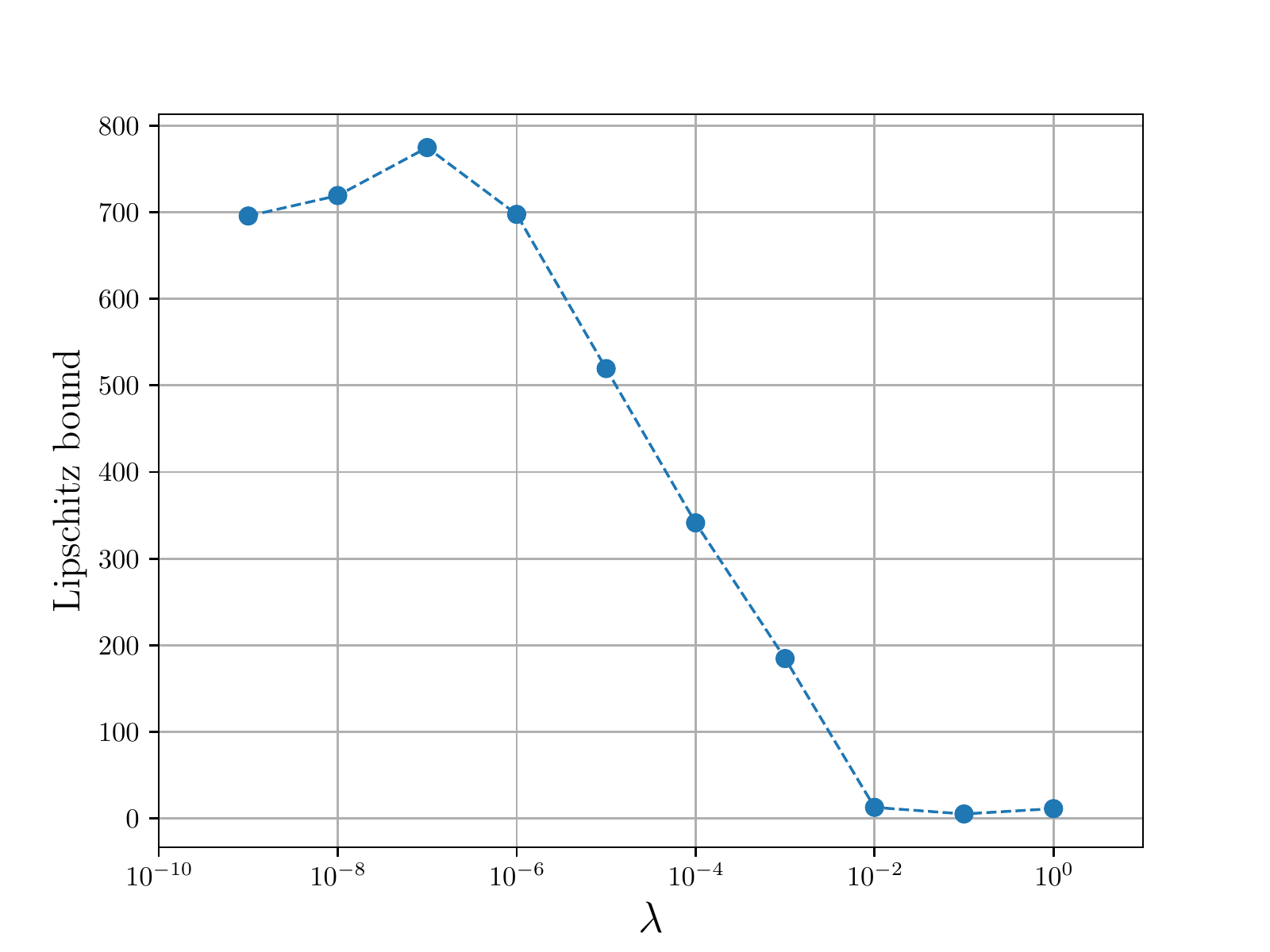}}
\centerline{\includegraphics[width=0.95\linewidth]{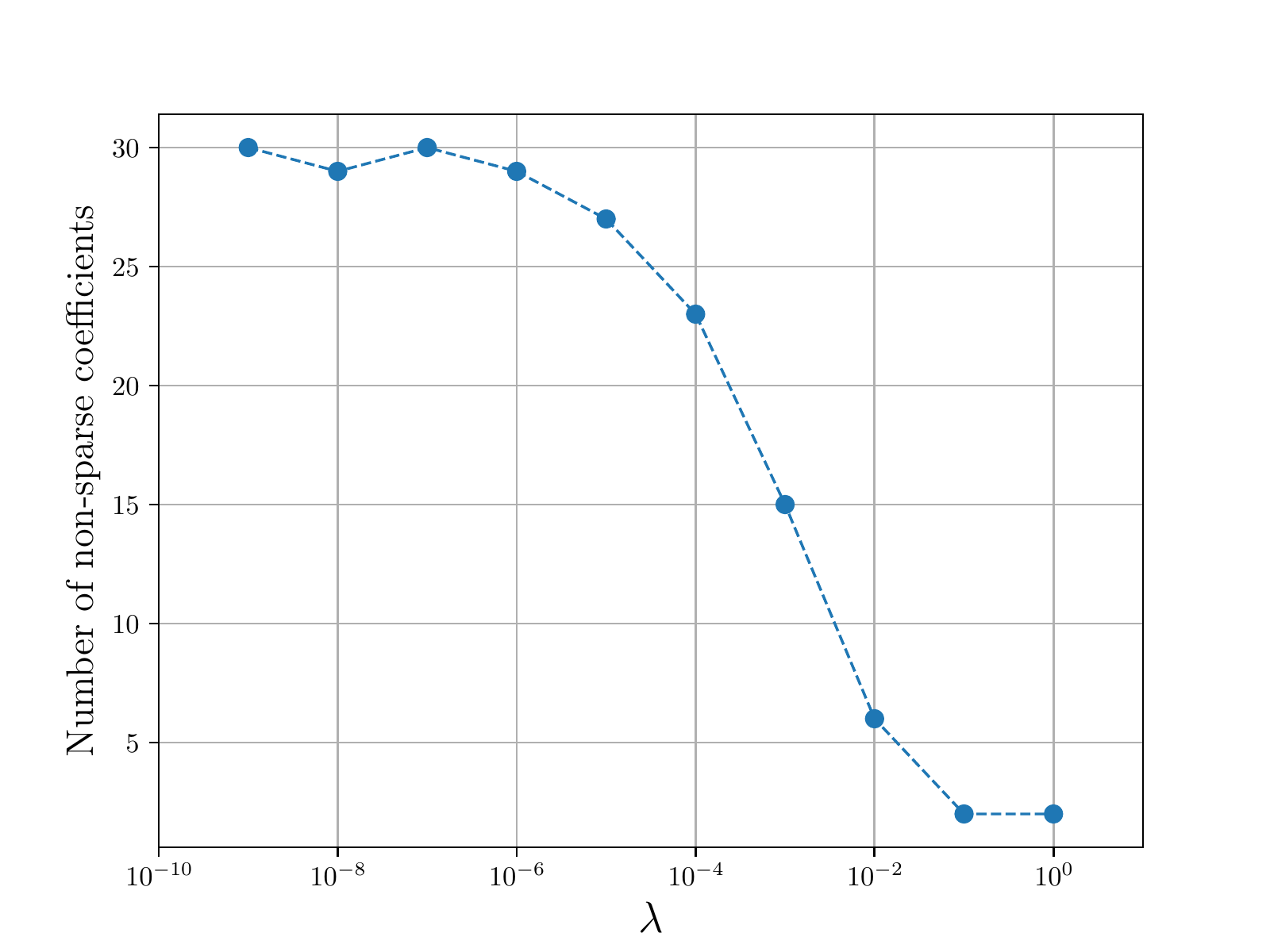}}
\caption{ From top to bottom: error rate, Lipschitz bound, and  total number of nonzero ReLU coefficients versus $\lambda$ in the simple architecture with  layer descriptor $(2,2,1)$. }
  \label{Fig:VsLambda} \medskip
\end{minipage}
\end{figure}  
As for the error rate, a definite transition occurs as $\lambda$ varies. This suggests a  range of ``proper''  values of $\lambda$ (in this case, $\lambda<10^{-3}$) in which the error would not change much.   The critical value $\lambda=10^{-3}$ is certainly the best choice, since it has a small error and, in addition, the overall network is maximally regularized in the sense of Lipschitz. However, one is required to compute these curves for each value of $\mu$  to find the optimal $\lambda$,  which can be time consuming. A heuristic (but faster) approach  is to honor \eqref{Eq:LmabdaTune}. In this case, it yields $\lambda\approx 0.5 \times 10^{-5}$, which lies within the favourable range of each plot.

\begin{table}[t]
\renewcommand{\arraystretch}{1.3}
\caption{ Number of parameters and Performance in the area classification experiment.}
\label{Table:AreaClassification}
\centering
\begin{tabular}{r|ccc}
\hline\hline
 & Architecture & $N_{\text{param}}$ & Performance \\
 \hline
ReLU & $(2,10,1)$ & 41 & 98.15 \\
LeakyReLU & $(2,10,1)$& 41 & 98.12    \\
PReLU & $(2,10,1)$& 51 & 98.19 \\
Deep Lipschitz  & $(2,2,1)$& {\bf 23} & {\bf 98.54} \\
\hline\hline
\end{tabular}
\end{table}
 
\subsection{$\ell_1$ versus $\ell_2$ Outer-Norms}\label{Subsec:L1vsL2}
As mentioned in Remark \ref{Rem:lpNorm}, we can replace the $\ell_1$ outer norm in our Lipschitz bound by   $\ell_2$,   which results in a tighter bound. We   compare  the two cases in the area classification experiment, where we consider    a network with the layer descriptor $(2,10,1)$ and train it with two outer-norms. The results are reported in Table \ref{Table:L1L2}. As can be seen, the two cases have similar performances. However,  the $\ell_1$ outer-norm results in a much sparser network with fewer parameters, due to its global sparsifying effect. 
\begin{table}[t]
\renewcommand{\arraystretch}{1.3}
\caption{ Effect of the $\ell_1$ vs. $\ell_2$ outer norms in the area classifcation experiment}
\label{Table:L1L2}
\centering
\begin{tabular}{r|ccc}
\hline\hline
 Outer-norm &  $N_{\text{param}}$  & Performance \\
 \hline
$\ell_1$ & 66 & 98.61  \\
$\ell_2$  & 89 & 98.39  \\
\hline\hline
\end{tabular}
\end{table}

\subsection{Effect of the Parameter $K$}
Until now, we have performed all experiments with $K=21$ spline knots. In this section, we let $K$ vary and examine how this effects.  We consider the area classification problem described in Section \ref{Sec:numset} and  train a simple neural network with layer descriptor $(2,2,1)$.

We also perform this experiment on the MNIST dataset \cite{lecun1998gradient} that consists of $28\times 28$ grayscale images of digits from 0 to 9. In this case, we used a neural network that consists of three blocks. The first two  are each composed of three layers: 1) a convolutional layer with a filter of size $5 \times 5$ and two  output channels; 2) a nonlinear layer that has shared activations across each output channels (two activations in each layer are being learned); and 3) a max-pooling layer with kernel and stride of size 2.  The third block is composed of a fully connected layer with output of size 10 followed by soft-max. The output of the network represents the probability of each digit.

The results are depicted in Figure \ref{Fig:KPlots}. In both cases, we also indicated the performance of ReLU and PReLU for comparison. We observe that the performance monotonically increases with $K$ until it reaches saturation. We conclude that, although finding the best value for $K$ is challenging,   suboptimal value still leads to substantial improvements in the performance of the network and typically to better performances than ReLU networks and its variants.
\begin{figure}[t]
\begin{minipage}{\linewidth}
\centering
\centerline{\includegraphics[width=0.95\linewidth]{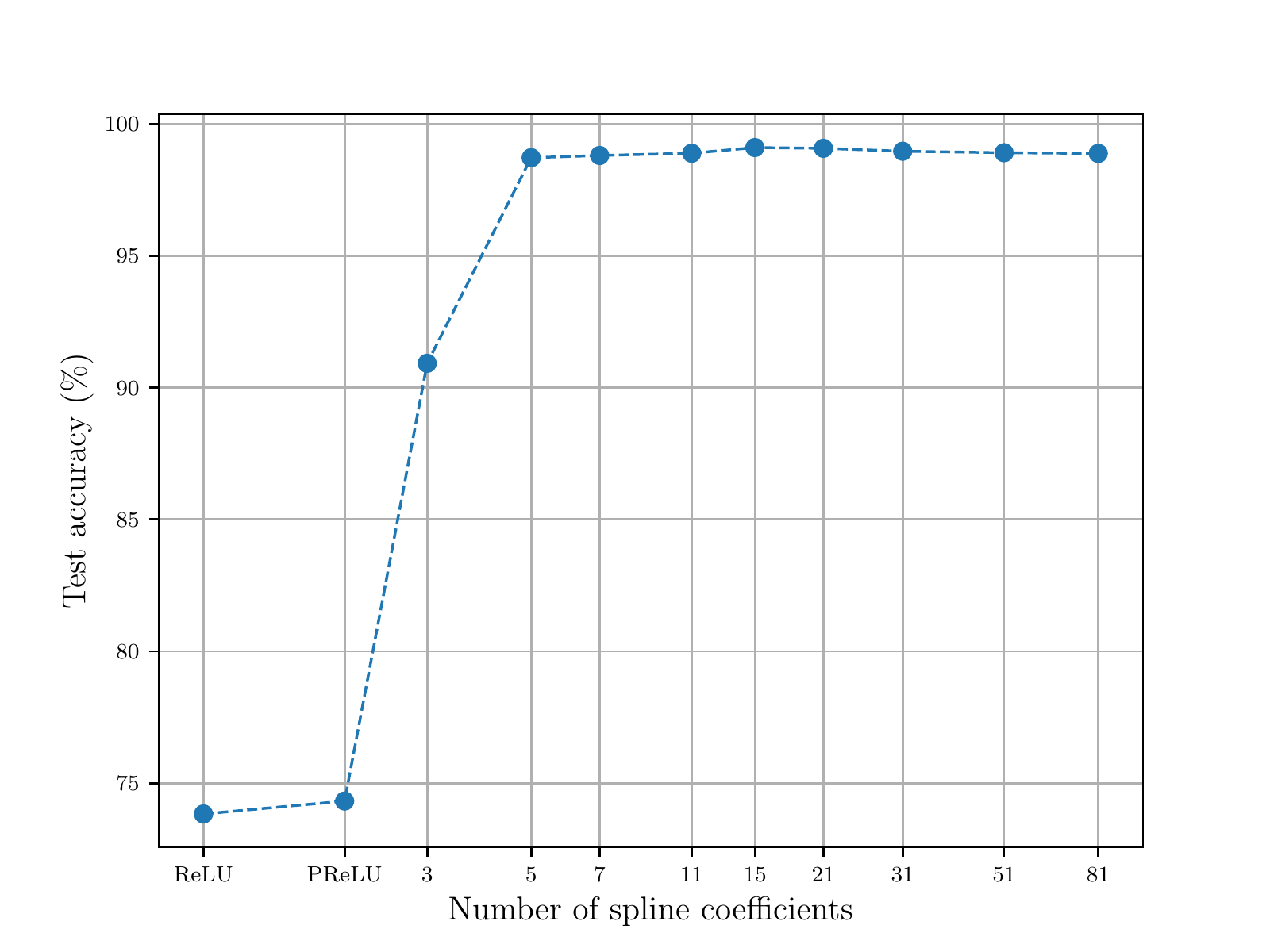}}
\centerline{\includegraphics[width=0.95\linewidth]{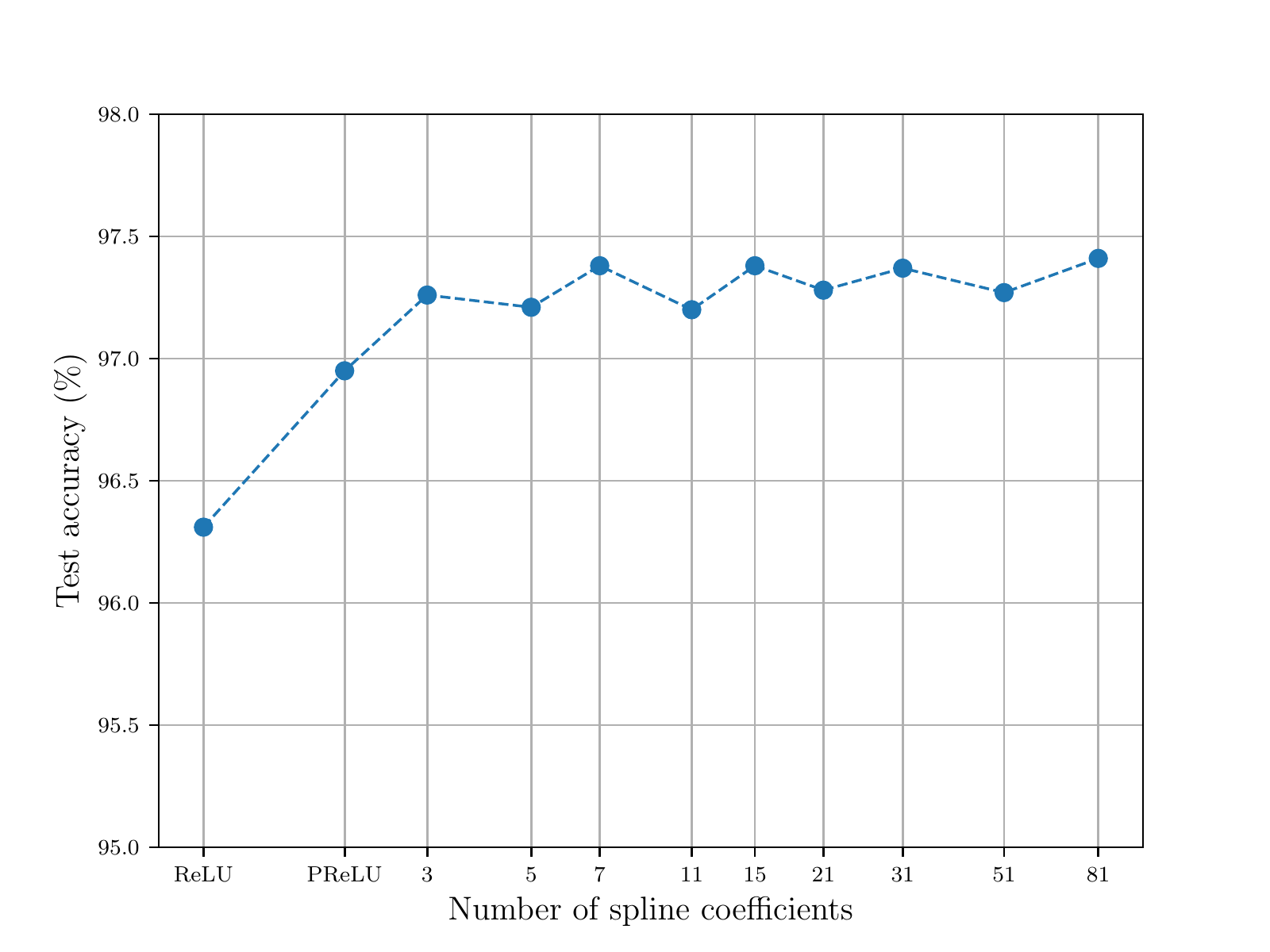}}
\caption{ The performance versus the number $K$ of spline knots of each activation functions in the area classification  (above) and in the MNIST experiment (below). }
  \label{Fig:KPlots} \medskip
\end{minipage}
\end{figure} 
 
\section{Conclusion}
\label{Sec:conclusion}
In this paper, we have introduced a variational framework to learn the activations of a deep neural network while controlling  its global Lipschitz regularity.  We have considered neural networks with   second-order bounded-variation activations and we provided a global bound for their Lipschitz constants. We have showed that the solution of our proposed variational problem exists and is in the form of a deep-spline network with continuous piecewise linear activation functions.  Our future work in this direction is to explore how the simplification of architectures can be compensated by the deployment of more complex activations. 

\section{Acknowledgement}
\label{Sec:Ackno}
The authors would like to thank the anonymous reviewers for their constructive feedbacks that improved the quality of the paper. They also would like to thank Dr. Julien Fageot, Dr. Pol del Aguila Pla, Dr. Jaejoon Yoo and, Pakshal Bohra for fruitful discussions.
\appendix 
\subsection{Topological Structure of $\mathrm{BV}^{(2)}(\mathbb{R})$} \label{App:BV2Space}
For a dual pair $(\mathcal{X},\mathcal{X}')$ of Banach spaces, the sequence $\{w_n\}_{n\in\mathbb{N}}\in\mathcal{X}'$ converges in the weak*-topology to $w_{\lim}\in \mathcal{X}'$  if, for any element $\varphi \in \mathcal{X}$, we have that
\begin{equation}
\langle w_n, \varphi \rangle \rightarrow \langle w_{\lim},\varphi \rangle, \quad n\rightarrow +\infty.
\end{equation}
Consequently, a functional $\nu : \mathcal{X}' \rightarrow\mathbb{R}$ is weak*-continuous  if $\nu(w_n) \rightarrow \nu(w_{\lim})$ for any sequence  $\{w_n\}_{n\in\mathbb{N}}\in\mathcal{X}'$ that converges in the weak*-topology to $w_{\lim}$. This can be shown to be equivalent to the inclusion $\nu \in \mathcal{X}$. In other words, the predual space $\mathcal{X}$ is isometrically isomorphic to the space of weak*-continuous functionals over $\mathcal{X}'$. 

We say that a sequence of neural networks converges in the weak*-topology if 
\begin{itemize}
\item the networks all have the same  layer descriptor (architecture); 
\item for any neuron in the architecture, the corresponding linear weights converge in the Euclidean  topology and the corresponding activations converge   in the weak*-topology of $\mathrm{BV}^{(2)}(\mathbb{R})$.
\end{itemize}     
 We conclude Appendix \ref{App:BV2Space} by mentioning two important properties of the space of second-order bounded-variation functions.
\begin{enumerate}[label=(\roman*)]
\item The sampling functionals defined as $\delta_{x_0}:f \mapsto f(x_0)$ for   $x_0\in\mathbb{R}$ are weak*-continuous in the topology of $\mathrm{BV}^{(2)}(\mathbb{R})$ \cite[Theorem 1]{Unser2018}.  
\item  Any function $f \in {\rm BV}^{(2)}(\R)$ can be uniquely represented as 
\begin{equation}\label{Eq:RepBV}
f(x) = \int_\mathbb{R} h(x,y) u(y) {\rm d}y  + b_1 + b_2 x, 
\end{equation}
where   
$h(x,y) = (x-y)_+ - (1-x) (-y)_+ -x(1-y)_+$, $u= \mathrm{D}^2 f$, $b_1 = f(0)$, and $b_2= f(1)-f(0)$. This result is a special case of Theorem 5 in \cite{unser2017splines}.  
\end{enumerate}

\subsection{Proof of Proposition \ref{BVLipschitz}} \label{App:BVLip}
\begin{proof}
First, one needs to verify that the kernel function $h(\cdot,y_0)$ in \eqref{Eq:RepBV} is Lipschitz-continuous with the constant 1 for all $y_0\in \mathbb{R}$. Indeed, for any $x_1,x_2,y_0\in\mathbb{R}$, we have that 
\begin{equation}
 |h(x_1,y_0) - h(x_2,y_0)| \leq |x_1-x_2|.
\end{equation}
Now, using the  representation \eqref{Eq:RepBV}    together with the triangle inequality, we have that
\begin{align*}
& |f(x_1)-f(x_2)|  = \\ & \left| \int_{\mathbb{R}} \left(h(x_1,y)-h(x_2,y)\right) w(y) \mathrm{d}y +  b_2  (x_1-x_2) \right| \nonumber \\ 
& \leq \left| \int_{\mathbb{R}} \left(h(x_1,y)-h(x_2,y)\right) w(y) \mathrm{d}y \right| 
  + |b_2| |x_1-x_2| \\ & \leq   \int_{\mathbb{R}} \left|h(x_1,y)-h(x_2,y)\right| |w(y)| \mathrm{d}y  
 + |b_2| |x_1-x_2| \\ & \leq   |x_1-x_2| \int_{\mathbb{R}} |w(y)| \mathrm{d}y  
 + |b_2| |x_1-x_2| \\ &  =  |x_1 -x_2| \|w\|_{\mathcal{M}}  + |b_2| |x_1 -x_2|  \\&   \leq \|f\|_{\mathrm{BV}^{(2)}} |x_1 -x_2|.
\end{align*}
\end{proof}
\subsection{Proof of Proposition \ref{Prop:OneSidedDerivatives}} \label{App:OneSideDer}
\begin{proof}
We   prove the existence of the right derivative at $x=0$ and deduce the existence of the left derivative by symmetry. Moreover,  since $\mathrm{BV}^{(2)}(\mathbb{R})$ is a shift-invariant function space, the existence of left and right derivatives will be ensured at any point $x_0\in\mathbb{R}$. 

Let us denote 
\begin{equation}
\Delta\sigma(a,b) =\frac{\sigma(a)-\sigma(b)}{a-b}, \quad a,b\in \mathbb{R}, a\neq b. 
\end{equation}
From Proposition \ref{BVLipschitz}, we have that
\begin{equation}\label{FinitenessDelta}
-\|\sigma\|_{\mathrm{BV}^{(2)}} \leq \Delta\sigma(a,b)  \leq \|\sigma\|_{\mathrm{BV}^{(2)}}, \quad  a,b\in \mathbb{R}, a\neq b.
\end{equation}
 Define quantities  $M_{\sup}$ and $M_{\inf}$ as  
\begin{align*}
& M_{\sup} = \limsup_{h\rightarrow 0^+} \Delta\sigma(h,0), \\ & M_{\inf} = \liminf_{h\rightarrow 0^+} \Delta\sigma(h,0).
\end{align*}
The finiteness of $|M_{\sup}|$ and $|M_{\inf}|$ is guaranteed by 
\eqref{FinitenessDelta}. Now, it remains to  show that $M_{\sup} = M_{\inf}$ to prove the existence of the right derivative.  Assume, by contradiction, that $M_{\sup} > M_{\inf}$. Consider a small value of $0<\epsilon<\frac{M_{\sup}-M_{\inf}}{3}$ and define the constants $C_1 = \left(M_{\sup} - \epsilon\right)$ and $C_2 =\left(M_{\inf} + \epsilon\right)$. Clearly, we have $C_1-C_2\geq \epsilon>0$. Moreover, due to the definition of $\limsup$ and $\liminf$, there exist  sequences $\{a_n\}_{n=0}^{\infty}$ and $\{b_n\}_{n=0}^{\infty}$ that are monotically decreasing to 0 and are such that 
\begin{align*}
& \Delta\sigma(a_n,0) > C_1, \quad \Delta\sigma(b_n,0)< C_2, \\
&  a_n > b_n > a_{n+1}, \quad \forall n\in\mathbb{N}.
\end{align*}
One then has that
\begin{align*}
\sigma(a_n)> C_1 a_n+ \sigma(0), \quad  \sigma(b_n)< C_2 b_n + \sigma(0)
\end{align*}
and, consequently, 
\begin{align}
& \Delta\sigma(a_n,b_n) \geq \frac{ C_1 a_n - C_2 b_n }{a_n-b_n}> C_1   , \label{Eq:nn}\\   
& \Delta\sigma(b_n,a_{n+1}) \leq \frac{C_2 b_n -C_1 a_{n+1}}{b_n-a_{n+1}} <C_2. \label{Eq:nnplus1}
\end{align}
From the definition of second-order total variation and using  \eqref{Eq:nn} and \eqref{Eq:nnplus1}, we obtain that
\begin{align*}
\|\mathrm{D}^2 \sigma\|_{\mathcal{M}} &\geq \sum_{n=0}^{\infty} \left| \Delta\sigma(a_n,b_n) -\Delta\sigma(b_n,a_{n+1})   \right|
\\ &  \geq \sum_{n=0}^{\infty} \left( \frac{ C_1 a_n - C_2 b_n }{a_n-b_n} - \frac{C_2 b_n -C_1 a_{n+1}}{b_n-a_{n+1}}\right) \\ & \geq \sum_{n=1}^{\infty} (C_1 -C_2 ) = \sum_{n=0}^\infty \epsilon =+\infty, 
\end{align*}
which contradicts the original assumption $\sigma\in\mathrm{BV}^{(2)}(\mathbb{R})$. Hence, $M_{\sup}=M_{\inf}$ and the right derivative exists.
 \end{proof}
\subsection{Proof of Theorem \ref{Thm:LipschitzBV}} \label{App:NNLip}
\begin{proof}
From Proposition \ref{BVLipschitz},  for any $\boldsymbol{x}_1,\boldsymbol{x}_2\in\mathbb{R}^{N_{l-1}}$, we have that
\begin{align}\label{Eq:Prop1Neuron}
\left|\sigma_{n,l}(\M w_{n,l}^T\V x_1)-\sigma_{n,l}(\M w_{n,l}^T\V x_2)\right|\leq \|\sigma_{n,l}\|_{{\rm BV}^{(2)}} \left|\M w_{n,l}^T(\V x_1-\V x_2)\right|.
\end{align}
Now, by using     Hölder's inequality, we   bound the Lipschitz constant of the linear layers as
\begin{equation}\label{Holder}
 \left|\M w_{n,l}^T(\V x_1-\V x_2)\right| \leq \|\M w_{n,l} \|_q  \|\V x_1-\V x_2\|_p. 
\end{equation}
By combining \eqref{Eq:Prop1Neuron} and \eqref{Holder} and using the fact that  $\|\mathbf{w}_{n,l}\|_{q}\leq \|\mathbf{W}_l\|_{q,\infty}$, we obtain that
\begin{align}
& \left|\sigma_{n,l}(\M w_{n,l}^T\V x_1)-\sigma_{n,l}(\M w_{n,l}^T\V x_2)\right|^p   \nonumber \\ & \quad \leq \|\mathbf{W}_l\|_{q,\infty}^p \|\sigma_{n,l}\|_{{\rm BV}^{(2)}}^p  \|\V x_1-\V x_2\|_p^p,  
\end{align}
which is a Lipschitz bound for the $(n,l)$th neuron of the neural network. By summing up over the neurons of layer $l$, we control the output of this layer as 
\begin{align}\label{Eq:LayerLipschitz}
\left\| \M f_{l}(\V x_1) - \M f_{l}(\V x_2)\right\|_p  \le  \|\mathbf{W}_{ l}\|_{q,\infty} \|\boldsymbol{\sigma}_l\|_{\mathrm{BV}^{(2)},p}  \|\V x_1-\V x_2\|_p.
\end{align}
Now,  the composition of the layer inequalities results in the inequality \eqref{LipschitzFinalBound} with the constant introduced in \eqref{LipschitzConstant}.
\end{proof}

\subsection{Proof of Proposition \ref{Prop:LipEuc}}\label{App:LipEuc}
Following Proposition \ref{BVLipschitz} and using Cauchy–Schwarz' inequality, we obtain that 
\begin{align}
& \left|\sigma_{n,l}(\M w_{n,l}^T\V x_1)-\sigma_{n,l}(\M w_{n,l}^T\V x_2)\right|  \nonumber 
\\& \quad \leq  \|\sigma_{n,l}\|_{{\rm BV}^{(2)}} \|\M w_{n,l}\|_{2}\|\V x_1-\V x_2\|_2.
\end{align}
Combining it with the known  hierarchy between the discrete $\ell_p$ norms and, in particular, the inequality $\|\V x\|_{2} \leq \|\V x\|_{1}$ for any $\V x\in \mathbb{R}^{N_l}$, we obtain that 
\begin{align}
\left\| \M f_{l}(\V x_1) - \M f_{l}(\V x_2)\right\|_2   &\leq \left\| \M f_{l}(\V x_1) - \M f_{l}(\V x_2)\right\|_1  \nonumber \\ & =\sum_{n=1}^{N_l}  \left|\sigma_{n,l}(\M w_{n,l}^T\V x_1)-\sigma_{n,l}(\M w_{n,l}^T\V x_2)\right|  \nonumber \\&  \leq \sum_{n=1}^{N_l}  \|\sigma_{n,l}\|_{{\rm BV}^{(2)}}  \|\M w_{n,l}\|_{2} \|\V x_1-\V x_2\|_2  \nonumber\\& \leq  \|\V \sigma_{l}\|_{{\rm BV}^{(2)},2}  \|\M W_l\|_F  \|\V x_1-\V x_2\|_2^2 \label{Eq:BoundEuclid}.
\end{align}
Note that in the last inequality of \eqref{Eq:BoundEuclid}, we have again used Cauchy-Schwarz' inequality. Combining with $\|\V \sigma_{l}\|_{{\rm BV}^{(2)},2}\leq \|\V \sigma_{l}\|_{{\rm BV}^{(2)},1}$, we have that 
\begin{align} \label{Eq:BoundEuclid2}
 \left\| \M f_{l}(\V x_1) - \M f_{l}(\V x_2)\right\|_2      \leq  \|\V \sigma_{l}\|_{{\rm BV}^{(2)},1}  \|\M W_l\|_F \|\V x_1-\V x_2\|_2.
\end{align}
Finally by composing \eqref{Eq:BoundEuclid2} through the layers, we obtain the announced bound. 

\subsection{Proof of Theorem \ref{Thm:weakstarNN}} \label{App:NNweakstar}
We first prove Lemma \ref{lemma:Shayan}. We recall that a sequence of functions $f_t:\mathcal{X}\rightarrow\mathcal{Y}$, $t\in\mathbb{N}$,  converges pointwise to $f_{\lim}:\mathcal{X}\rightarrow\mathcal{Y}$  if, for all  $x\in \mathcal{X}$, 
\begin{equation}
f_{\lim}(x) = \lim_{t\rightarrow +\infty} f_t(x).
\end{equation}
\begin{lemma}\label{lemma:Shayan}
Given the Banach spaces $\mathcal{X}$,$\mathcal{Y}$, and $\mathcal{Z}$, consider the two sequences of functions $f_t:\mathcal{X}\rightarrow\mathcal{Y}$ and  $g_t:\mathcal{Y}\rightarrow\mathcal{Z}$ such that they converge pointwise to the functions $f_{\lim}:\mathcal{X}\rightarrow\mathcal{Y}$ and $g_{\lim}:\mathcal{Y}\rightarrow\mathcal{Z}$, respectively.  Moreover, assume that the functions $g_t$ are all Lipschitz-continuous with a shared constant $C>0$, so that,  for any $y_1,y_2\in\mathcal{Y}$, one has that
\begin{equation}
\|g_t(y_1)-g_t(y_2)\|_\mathcal{Z} \leq C\|y_1-y_2\|_\mathcal{Y}, \quad \forall t\in\mathbb{N}.
\end{equation}
Then, the composed sequence  $h_t:\mathcal{X}\rightarrow\mathcal{Z}$ with $h_t = g_t \circ f_t$ converges pointwise to $h_{\lim}=g_{\lim} \circ f_{\lim}$.
\end{lemma}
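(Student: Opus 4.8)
The plan is to estimate the distance $\|h_t(x) - h_{\lim}(x)\|_{\mathcal{Z}}$ at an arbitrary fixed point $x \in \mathcal{X}$ by inserting the intermediate term $g_t(f_{\lim}(x))$ and applying the triangle inequality, so that
\begin{align*}
\|h_t(x) - h_{\lim}(x)\|_{\mathcal{Z}} &\leq \|g_t(f_t(x)) - g_t(f_{\lim}(x))\|_{\mathcal{Z}} \\ & \quad \mbox{} + \|g_t(f_{\lim}(x)) - g_{\lim}(f_{\lim}(x))\|_{\mathcal{Z}}.
\end{align*}
First I would bound the first term: since every $g_t$ is Lipschitz with the shared constant $C$, this term is at most $C \|f_t(x) - f_{\lim}(x)\|_{\mathcal{Y}}$, which tends to $0$ as $t \to +\infty$ by the pointwise convergence of $f_t$ to $f_{\lim}$ at the point $x$. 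The role of the uniform Lipschitz hypothesis is precisely to make this bound independent of $t$; without a shared constant the factor $C_t$ could blow up and defeat the argument.

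Next I would handle the second term: with $y \eqdef f_{\lim}(x) \in \mathcal{Y}$ fixed, the expression $\|g_t(y) - g_{\lim}(y)\|_{\mathcal{Z}}$ tends to $0$ as $t \to +\infty$ simply by the assumed pointwise convergence of $g_t$ to $g_{\lim}$ evaluated at the single point $y$. Adding the two vanishing bounds gives $\lim_{t\to+\infty}\|h_t(x) - h_{\lim}(x)\|_{\mathcal{Z}} = 0$, which is exactly the claim that $h_t \to h_{\lim}$ pointwise. Since $x$ was arbitrary, we are done.

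There is no real obstacle here; the only subtlety worth stating explicitly is why both terms need separate treatment and why the shared Lipschitz constant is indispensable for the first one. (One could also note that $g_{\lim}$ inherits the Lipschitz constant $C$ by passing to the limit, though this is not needed for the proof.) I would keep the write-up to the two-line triangle-inequality split followed by the two limit statements, with a sentence emphasizing the uniformity of $C$.
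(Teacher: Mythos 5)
Your proof is correct and follows exactly the same route as the paper's: the triangle-inequality split through the intermediate point $g_t(f_{\lim}(x))$, the shared Lipschitz constant to control the first term via pointwise convergence of $f_t$, and pointwise convergence of $g_t$ at the fixed point $f_{\lim}(x)$ for the second. No gaps; the remark on why uniformity of $C$ is indispensable matches the role it plays in the paper's argument.
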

\begin{proof}
We use the triangle inequality to obtain  that
\begin{align}\label{Eq:Triangle}
\|h_t(x) - h_{\lim}(x) \|_\mathcal{Z} &\leq \|h_t(x) - g_{t}(f_{\lim}(x))\|_\mathcal{Z} \nonumber \\ & \quad +\|g_{t}(f_{\lim}(x)) - h_{\lim}(x)\|_\mathcal{Z} 
\end{align} 
for all $x\in\mathcal{X}$. The uniform Lipschitz-continuity of $g_t$ then yields that
\begin{align}
\|h_t(x) - g_{t}(f_{\lim}(x))\|_\mathcal{Z} & = \|g_t(f_t(x)) - g_{t}(f_{\lim}(x))\|_\mathcal{Z} \nonumber \\ & \quad \leq C\|f_t(x) - f_{\lim}(x)\|_\mathcal{Y} \rightarrow 0  \label{Eq:pointwise}
\end{align}
as $t\rightarrow +\infty$. This is due to  the pointwise convergence of $\{f_t\}\rightarrow f_{\lim}$. Similarly, the pointwise convergence $\{g_t\}\rightarrow g_{\lim}$ implies that  
\begin{align*}
  \|g_{t}(f_{\lim}(x)) - g_{\lim}(f_{\lim}(x))\|_\mathcal{Z} \rightarrow 0, \quad t\rightarrow +\infty
\end{align*}
which, together with \eqref{Eq:pointwise} and \eqref{Eq:Triangle}, proves the  pointwise convergence of  $h_t\rightarrow h_{\lim}$ as $t\rightarrow +\infty$.
\end{proof}

\begin{proof}[Proof of Theorem \ref{Thm:weakstarNN}]
Assume that the sequence   $\{\mathbf{f}_{\rm deep}^{(t)}\}$ of neural networks with layers $\mathbf{f}_l^{(t)}=\boldsymbol{\sigma}_l^{(t)}\circ \mathbf{W}_l^{(t)}:\mathbb{R}^{N_{l-1}}\rightarrow\mathbb{R}^{N_{l}}$ for $l=1,\ldots,L$ (described in \eqref{Eq:DNN}) converges in the weak*-topology to 
\begin{align*}
f_{\rm deep}^{\lim}=\mathbf{f}_L^{\lim} \circ \cdots \circ \mathbf{f}_1^{\lim}, \quad \forall l: \mathbf{f}_{l}^{\lim}=\boldsymbol{\sigma}_{l}^{\lim}\circ \mathbf{W}_{l}^{\lim}.
\end{align*}
 By definition, every element of $\boldsymbol{\sigma}_{l}^{(t)}$ converges in the weak*-topology to the corresponding element in $\boldsymbol{\sigma}_{l}^{\lim}$. The convergence is also pointwise due  to the weak*-continuity of the sampling functional in the space of activation functions $\mathrm{BV}^{(2)}(\mathbb{R})$.  The conclusion is thaty  $\mathbf{f}_{l}^{(t)}$  also converges pointwise   to $\mathbf{f}_{l}^{\lim}$.
 
 In addition,  knowing that any norm is weak*-continuous in its corresponding Banach space, one can find the uniform constant $T_1>0$ such that, for all $t>T_1$ and for all $l=1,\ldots,L$, we have that 
 \begin{equation}\label{Eq:UnifSigma}
 \|\boldsymbol{\sigma}_{l}^{(t)}\|_{\mathrm{BV}^{(2)},2} \leq C_1=2 \max_{l}\|\boldsymbol{\sigma}_{l}^{\lim}\|_{\mathrm{BV}^{(2)},2}.
\end{equation}
Similarly, from the convergence $\mathbf{W}_{l}^{(t)} \rightarrow \mathbf{W}_{l}^{\lim}$, one deduces that there exists a constant $T_2>0$ such that, for all $t>T_2$ and for all $l=1,\ldots,L$, we have that 
\begin{equation}\label{Eq:UnifWei}
 \|\mathbf{W}_{l}^{(t)}\|_{2,\infty} \leq C_2=2 \max_{l}\|\mathbf{W}_{l}^{\lim}\|_{2,\infty}.
\end{equation}
Now, from \eqref{Eq:UnifWei} with $p=q=2$ and using \eqref{Eq:LayerLipschitz}, \eqref{Eq:UnifSigma}, we deduce that, for $t>\max(T_1,T_2)$,  each layer of $\mathbf{f}_{\rm deep}^{(t)}$ is Lipschitz-continuous with the shared constant $C=C_1C_2$.  Combining it with the  pointwise convergence  $\mathbf{f}_{l}^{(t)} {\rightarrow}\mathbf{f}_{l}^{\lim}$,  one completes the proof by sequentially using the outcome of Lemma \ref{lemma:Shayan}. 
\end{proof}

\subsection{Proof of Theorem \ref{Thm:Main}} \label{App:Main}
\begin{proof}
We divide the proof in two parts. First, we show the existence of the solution  of \eqref{Eq:DeepCost} and, then,  we show the existence of a solution with activations of the form \eqref{Eq:ActivationDeepSpline}. 

\textbf{ Existence of Solution} Consider an arbitrary neural network $\mathbf{f}_{\rm deep,0}$  with the cost $A= \mathcal{J}(\mathbf{f}_{\rm deep,0};X,Y)$.  The coercivity  of $\mathrm{R}_{l}$  guarantees   the existence of the constants $B_{l}$ for $l=1,2,\ldots,L$ such that 
\begin{equation}
\|\mathbf{w}_{n,l}\|_2 \geq B_{l} \Rightarrow \mathrm{R}_{l}(\mathbf{w}_{n,l}) \geq \frac{A}{\mu_{l}}.
\end{equation}
 This allows us to transform the  unconstrained problem \eqref{Eq:DeepCost} into the   equivalent  constrained minimization 
\begin{align}\label{Eq:DeepCostConst}
 \min_{\substack{ \mathbf{w}_{n,l}\in\mathbb{R}^{N_{l-1}}, \\ \sigma_{n,l} \in \mathrm{BV}^{(2)}(\mathbb{R}) }}
\mathcal{J}(\mathbf{f}_{\rm deep}), \quad \text{s.t.} \quad \begin{cases} 
\| \mathbf{w}_{n,l} \|_2 \leq B_{l}, \\ 
\mathrm{TV}^{(2)}(\sigma_{n,l}) \leq A/ \lambda_{l}, \\ 
|\sigma_{n,l}(0)|  \leq A/\lambda_{l}, \\
| \sigma_{n,l}(1)| \leq A/\lambda_{l}.
 \end{cases}
\end{align}
The equivalence is due to the fact that any neural network that does not satisfy the constraints of \eqref{Eq:DeepCostConst2} has a strictly bigger cost   than $\mathbf{f}_{\rm deep,0}$ and, hence, is not in the solution set.    Due to the decomposition \eqref{Eq:RepBV}, we can  rewrite \eqref{Eq:DeepCostConst} as 
\begin{align}\label{Eq:DeepCostConst2}
 \min_{\substack{ \mathbf{w}_{n,l}\in\mathbb{R}^{N_{l-1}}, \\ u_{n,l} \in  \mathcal{M}(\mathbb{R}) \\ b_{\cdot,n,l}\in\mathbb{R} }}
\mathcal{J}(\mathbf{f}_{\rm deep}), \quad \text{s.t.}, \quad \begin{cases} 
\|\mathbf{w}_{n,l} \|_2 \leq B_l , \\ 
\|u_{n,l}\|_{\mathcal{M}} \leq A/ \lambda_{l}, \\ 
|b_{1,n,l}| ,|b_{2,n,l} | \leq 2A/\lambda_{l}.
 \end{cases}
\end{align}
 Due to the  Banach-Anaoglu theorem \cite{rudin1991functional},  the feasible set in \eqref{Eq:DeepCostConst2} is weak*-compact. Moreover, the cost functional defined in \eqref{Eq:DeepCost} is a composition and sum of lower-semicontinuous functions and weak*-continuous functionals (see Theorem \ref{Thm:weakstarNN}). Hence, it is itself weak*-lower semicontinuous. This guarantees   the existence of a minimizer of \eqref{Eq:DeepCostConst2} (and, consequently, of \eqref{Pb:DeepSpline}), due to the generalized Weierstrass theorem   \cite{kurdila2006convex}.
 
\textbf{Optimal Activations} Let   $\tilde{\mathbf{f}}_{\rm deep}$  be  a  solution of \eqref{Pb:DeepSpline} with  
\begin{equation}
\tilde{\mathbf{f}}_{\rm deep} = \tilde{\mathbf{f}}_L \circ \cdots \circ \tilde{\mathbf{f}}_1, \quad \forall l: \tilde{\mathbf{f}}_{l} = \tilde{\boldsymbol{\sigma}}_{l} \circ \tilde{\mathbf{W}}_{l}.
\end{equation}
For any input vector $\boldsymbol{x}_m$ in the dataset $\boldsymbol{X}$, we then define the vectors $\boldsymbol{z}_{l,m} = (z_{1,l,m},\ldots,z_{N_l,l,m}),\boldsymbol{s}_{l,m} = (s_{1,l,m},\ldots,s_{N_l,l,m})\in \mathbb{R}^{N_l}$ as 
\begin{align*}
&\boldsymbol{z}_{l,m}  = \tilde{\mathbf{f}}_l \circ \cdots \circ \tilde{\mathbf{f}}_1(\boldsymbol{x}_m),\\
&\boldsymbol{s}_{l,m}  = \tilde{\mathbf{W}}_l \circ \tilde{\mathbf{f}}_{l-1} \circ \cdots \circ \tilde{\mathbf{f}}_1(\boldsymbol{x}_m).
\end{align*}
Now, we show that the activation $\tilde{\sigma}_{n,l}$  of the neuron indexed by $(n,l)$ is indeed a solution  of the minimization
\begin{align}
 \min_{   \sigma \in {\rm BV}^{(2)}(\R)  }
    & {\rm TV}^{(2)}(\sigma) =\|\mathrm{D}^2 \sigma \|_{\mathcal{M}}  \quad s.t. \nonumber \\ & \begin{cases} \sigma( s_{n,l,m})=  z_{n,l,m}, & m=1,2,\ldots,M, \\ \sigma(x) = \tilde{\sigma}_{n,l}(x), & x\in \{0,1\} \label{FeasibleCond}.  \end{cases}
\end{align}
Assume by contradiction that there exists a function $\sigma\in{\rm BV}^{(2)}(\R) $ that satisfies the feasiblity conditions \eqref{FeasibleCond} and is  such that ${\rm TV}^{(2)}(\sigma) < {\rm TV}^{(2)}(\tilde{\sigma}_{n,l})$. Then, we have that 
\begin{align}
\|\sigma\|_{\mathrm{BV}^{(2)}} & = {\rm TV}^{(2)}(\sigma)  + |\sigma(0)|+|\sigma(1) | \nonumber \\
&=  {\rm TV}^{(2)}(\sigma)  + |\tilde{\sigma}_{n,l}(0)|+|\tilde{\sigma}_{n,l}(1) | \nonumber  \\ 
& < {\rm TV}^{(2)}(\tilde{\sigma}_{n,l})  + |\tilde{\sigma}_{n,l}(0)|+|\tilde{\sigma}_{n,l}(1) | 
\nonumber \\& = \|\tilde{\sigma}_{n,l}\|_{\mathrm{BV}^{(2)}}.\label{Eq:BVInopt}
\end{align}
In addition, due to the feasiblity assumptions $ \sigma( s_{n,l,m})=  z_{n,l,m}$ for $m=1,\ldots,M$, one 
readily verifies that, by replacing $\tilde{\sigma}_{n,l}$ by $\sigma$ in the optimal neural network $
\tilde{\mathbf{f}}_{\rm deep}$, the data fidelity term $ \sum_{m=1}^M \mathrm{E}\big(\V y_m,\tilde{\M f}_{\rm 
deep}(\V x_m)\big) $ in \eqref{Eq:DeepCost} remains untouched. The same holds for the weight 
regularization term $\sum_{l=1}^L \mathrm{R}_{l} (\mathbf{W}_l)$. However, from 
\eqref{Eq:BVInopt},  one gets a strictly smaller overall $\mathrm{BV}^{(2)}$ penalty with $\sigma$  that  contradicts the 
optimality of $\tilde{\mathbf{f}}_{\rm deep}$.  With a similar argument, one sees that, for 
any solution $\sigma\in {\mathrm{BV}^{(2)}}$ of \eqref{FeasibleCond}, the substitution of $
\tilde{\sigma}_{n,l}$ by $\sigma$ yields another solution of   \eqref{Pb:DeepSpline}. Due to 
Lemma 1 of \cite{Unser2018}, Problem \eqref{FeasibleCond} has a solution that is a linear spline of the form \eqref{Eq:ActivationDeepSpline} with $K_{n,l}\leq \left(\tilde{M}-2\right)$, where $\tilde{M}=M+2$ is the number of constraints in \eqref{FeasibleCond}. By using this result for every neuron $(n,l)$, we verify  the existence of a deep-spline solution of \eqref{Pb:DeepSpline}. 
\end{proof}

\subsection{Proof of Theorem \ref{Thm:BVProp}} \label{App:BVProp}
\begin{proof}
For any local minima $\mathbf{f}_{\rm deep}$ of \eqref{Pb:DeepSpline} with linear weights $\mathbf{W}_{l}$ and nonlinear layers $\boldsymbol{\sigma}_{l}$ and for any layer $l^*\neq L$, consider the perturbed network $\mathbf{f}_{\rm deep,\epsilon}$ with  the linear layers 
\begin{equation}
\mathbf{W}_{l,\epsilon}  = \begin{cases} \mathbf{W}_l, & l\neq l^*+1\\ 
  (1+\epsilon) \mathbf{W}_{l^*}, & l=l^*+1 \end{cases}
\end{equation}
 and the nonlinear layers 
\begin{equation}
\boldsymbol{\sigma}_{l,\epsilon}  = \begin{cases}   \boldsymbol{\sigma}_{l}, & l\neq l^* \\ (1+\epsilon)^{-1} \boldsymbol{\sigma}_{l^*}, & l=l^*
 \end{cases}
\end{equation} 
for any $\epsilon \in (-1,1)$.  One readily verifies that, for any $\boldsymbol{x} \in\mathbb{R}^{N_0}$ and any $\epsilon \in \mathbb{R}$, we have that $\mathbf{f}_{\rm deep,\epsilon} (\boldsymbol{x}) = \mathbf{f}_{\rm deep}(\boldsymbol{x})$ and, hence, both networks have the same data-fidelity penalty in the global cost \eqref{Eq:DeepCost}. In fact, the only difference between their overall cost is associated to the regularization terms of the $(l^*+1)$th linear layer and the $l$th nonilnear layer. For those,   the scaling property of norms yields that
\begin{align}
&\| \mathbf{W}_{l^*+1, \epsilon}\|_F^2 = (1+\epsilon)^2 \| \mathbf{W}_{l^*+1}\|_F^2, \label{Eq:LinearPerturbed}\\ &  \|\boldsymbol{\sigma}_{l^*,\epsilon}\|_{\mathrm{BV}^{(2)},1}=(1+\epsilon)^{-1}\| \boldsymbol{\sigma}_{l^*}\|_{\mathrm{BV}^{(2)},1}.   
\label{Eq:NonlinearPerturbed}
\end{align}
Due to the (local) optimality of $\mathbf{f}_{\rm deep}$, there exists a constant $\epsilon_{\max}$ such that, for all $\epsilon \in (-\epsilon_{\max},\epsilon_{\max})$, we have that
\begin{equation}\label{Eq:OptimalityLocal}
\mathcal{J}(\mathbf{f}_{\rm deep}) \leq \mathcal{J}(\mathbf{f}_{\rm deep,\epsilon}).
\end{equation}
Now, from  \eqref{Eq:LinearPerturbed} and \eqref{Eq:NonlinearPerturbed}, we have that 
\begin{align*}
\mu_{l^*+1}\| \mathbf{W}_{l^*+1}\|_F^2+ \lambda_{l}\| \boldsymbol{\sigma}_{l^*}\|_{\mathrm{BV}^{(2)},1} & \leq  \mu_{l^*+1} (1+\epsilon)^2 \| \mathbf{W}_{l^*+1}\|_F^2\\& \quad +  \lambda_{l }(1+\epsilon)^{-1}\| \boldsymbol{\sigma}_{l^*}\|_{\mathrm{BV}^{(2)},1},
\end{align*}
for any $\epsilon \in  (-\epsilon_{\max},\epsilon_{\max})$. By simplifying the latter inequality, we get that 
\begin{equation}
0 \leq \epsilon g(\epsilon), \quad \forall \epsilon \in (-\epsilon_{\max},\epsilon_{\max}), 
\end{equation}
where $g(\epsilon)= \mu_{l^*+1} \|\mathbf{W}_{l^*+1}\|_F^2 (\epsilon+2) - \lambda_{l} \|\boldsymbol{\sigma}\|_{\mathrm{BV}^{(2)}} (1+\epsilon)^{-1} $ is a continuous function of $\epsilon$ in the interval $(-1,1)$. This yields that $g(\epsilon)$ is nonnegative for positive values of $\epsilon$ and is nonpositive for negative values of $\epsilon$. Hence, we get that $g(0)=0$ and, consequently, that 
\begin{equation}
\lambda_{l^*} \| \boldsymbol{\sigma}_{l^*}\|_{\mathrm{BV}^{(2)}} = 2 \mu_{l^*+1} \| \mathbf{W}_{l^*+1}\|_F^2.
\end{equation}
\end{proof}

 \bibliographystyle{IEEEtran}
\bibliography{Deep.bib}

\end{document}